\tikzset{mynode/.style={inner sep=2pt,fill,outer sep=0,circle}}
\newtheorem{lemma}{Lemma}
\newtheorem{theorem}{Theorem}
\newtheorem*{theorem*}{Theorem}
\newtheorem{proposition}{Proposition}
\newtheorem{definition}{Definition}
\icmltitlerunning{LazyIter: A Fast Algorithm for Counting Markov Equivalent DAGs and Designing Experiments}
\begin{document}
	
	\twocolumn[
	\icmltitle{LazyIter: A Fast Algorithm for Counting Markov Equivalent DAGs and Designing Experiments}
	



\begin{icmlauthorlist}
\icmlauthor{Ali AhmadiTeshnizi}{sharif}
\icmlauthor{Saber Salehkaleybar}{sharif}
\icmlauthor{Negar Kiyavash}{epfl}
\end{icmlauthorlist}

\icmlaffiliation{sharif}{Department of Electrical Engineering, Sharif University of Technology, Tehran, Iran}
\icmlaffiliation{epfl}{School of Management of Technology, Ecole Polytechnique F´ed´erale de Lausanne, Switzerland}

\icmlcorrespondingauthor{Ali AhmadiTeshnizi}{ali.ahmadi215@student.sharif.edu}
\icmlcorrespondingauthor{Saber Salehkaleybar}{saleh@sharif.edu}
\icmlcorrespondingauthor{Negar Kiyavash}{negar.kiyavash@epfl.ch}

\icmlkeywords{Machine Learning, ICML, Causality, Causal Inference, Active Learning, Passive Learning, Experiment Design}

\vskip 0.3in
]



\printAffiliationsAndNotice{} 

	\begin{abstract}
		The causal relationships among a set of random variables are commonly represented by a Directed Acyclic Graph (DAG), where there is a directed edge from variable $X$ to variable $Y$ if $X$ is a direct cause of $Y$. From the purely observational data, the true causal graph can be identified up to a Markov Equivalence Class (MEC), which is a set of DAGs with the same conditional independencies between the variables. The size of an MEC is a measure of complexity for recovering the true causal graph by performing interventions. We propose a method for efficient iteration over possible MECs given intervention results. We utilize the proposed method for computing MEC sizes and experiment design in active and passive learning settings. Compared to previous work for computing the size of MEC, our proposed algorithm reduces the time complexity by a factor of $O(n)$ for sparse graphs where $n$ is the number of variables in the system. Additionally, integrating our approach with dynamic programming, we design an optimal algorithm for passive experiment design. Experimental results show that our proposed algorithms for both computing the size of MEC and experiment design outperform the state of the art.
	\end{abstract}
	
	\section{Introduction}
	Directed Acyclic Graphs (DAGs) are the most commonly used structures to represent causal relations between random variables, where a directed edge $X\rightarrow Y$ means that variable $X$ is a direct cause of variable $Y$. Conditional independencies between different variables can be inferred from observational data and consequently, the ground truth graph is identified up to Markov Equivalence Class (MEC) \cite{10.5555/1642718, Spirtes2000}.
	Unique identification of the ground truth DAG among the graphs in an MEC generally requires interventions on variables \cite{eberhardt2007interventions}. In some scenarios, interventions could be costly (for instance, in biological experiments), and therefore, selecting the optimal intervention target to learn the causal structure is of great interest \cite{eberhardt2005number,he2008active,eberhardt2012almost,Hauser_2014,shanmugam2015learning,kocaoglu2017experimental,ghassami2018budgeted,lindgren2018experimental,agrawal2019abcd}. Several metrics have been suggested in the literature for target selection \cite{he2008active,Hauser_2014,ghassami2018budgeted,agrawal2019abcd}. A good metric for measuring the effectiveness of an intervention is the number of remaining DAGs in an MEC after the intervention \cite{he2008active}. To use this metric for target selection, we must be able to efficiently count the number of DAGs in an MEC. 
	
	Some previous work
	used the clique tree representation of chordal graphs to divide the causal graph into smaller subgraphs, and perform counting on each subgraph separately \cite{Ghassami_2019,DBLP:conf/aaai/TalvitieK19}.
	 The main issue with this approach is the dependence on the maximum clique size which can result in $O(n!)$ operations in some cases where $n$ is the number of variables. \citet{Ghassami_2019} and \citet{DBLP:conf/aaai/TalvitieK19} used dynamic programming to count DAGs in an MEC. The best time complexity of these approaches is in the order of $O(2^n n^4)$. However, both approaches do not take advantage of sparsity if the graph is sparse. In some other work, the number of edges oriented after an intervention is proposed as the target selection metric \cite{Hauser_2014}. \citet{Hauser_2014} used the idea of conditioning on different edge orientations for edges connected to a single node to choose the optimal single-node intervention target. The time complexity of proposed algorithm depends on the size of largest clique in MEC which in the worst case is exponential. Recently, several work have been proposed for experiment design in passive and active learning settings which use the aforementioned metrics for target selection \cite{ghassami2018budgeted,kocaoglu2017experimental,agrawal2019abcd}. As such, it is desirable to efficiently compute the MEC size.

	In this paper, we propose ``LazyIter", a method for efficiently iterating over possible DAGs that we might get after an intervention on a single node. In this method, we start by setting a node as the root of the DAG and finding the corresponding essential graph resulting from intervening on this node. Subsequently, we take advantage of similarities between different candidate graphs to eliminate the recalculation of edge orientations and find other graphs just by reorienting a small subset of the edges.  We utilize this method to design algorithms for computing the size of MECs and solving the budgeted experiment design problem in active and passive settings. The main contributions of this paper are the following:
	\begin{itemize}

	    \item We propose an algorithm for computing the MEC size of a graph, which improves the time complexity by a factor of $O(n)$ in sparse graphs with respect to previous work \cite{DBLP:conf/aaai/TalvitieK19,Ghassami_2019}. Our experiments show that the algorithm outperforms previous work in dense graphs too. 
	    
	    \item In the active learning setting, we propose two algorithms for designing experiments for both metrics discussed earlier (number of edges and size of MEC). These algorithms are up to $O(n)$ times faster than the previous approaches \cite{he2008active,Hauser_2014}.
	    \item In the passive learning setting, we propose a dynamic programming algorithm for experiment design. To the best of our knowledge, this is the first efficient exact algorithm capable of finding the optimal solution in the passive learning setting. The most closely related work is an approximation algorithm presented in \citep{Ghassami_2019}, which has a considerably higher computational complexity.
	    \end{itemize}

	The paper is organized as follows: First, we discuss the terminology and preliminaries in Section 2. Then, in Section 3, we explain our iteration approach and prove its correctness. In Sections 4 and 5, we apply this approach to design algorithms for computing the MEC size and experiment design and also analyze their complexities. Finally, in Section 6, we demonstrate the efficiency of these algorithms by evaluating them on a diverse set of MECs. 
	
	\section{Preliminaries}
	\subsection{Graph Terminology}
	A graph $G(V,E)$ is represented with a set of nodes $V$ and a set of edges $E$, where each edge is a pair $(a,b)$ such that $a,b \in V$. We say there is an undirected edge between nodes $a$ and $b$ if both $(a,b)\in E,(b,a) \in E$, and say there is a directed edge from $a$ to $b$ if $(a,b) \in E,$ and $(b,a) \notin E$. A directed (undirected) edge is denoted with $a \rightarrow b \in G$ (or $a-b \in G$). We also use $(a,b) \in E$ and $(a,b) \in G$ subsequently. The set of all directed edges of $G$ is denoted by $Dir(G)$, and the number of directed edges in $G$ is denoted by $|Dir(G)|$.
	A graph is called undirected (directed) if all of its edges are undirected (directed), and is called partially directed if it has both undirected and directed edges.
	The induced subgraph $G[S]$ is the graph with node set $S$ and with edge set containing all of the edges in $E$ that have both endpoints in $S$. Union of graphs $G_1(V,E_1), G_2(V,E_2), ..., G_k(V,E_k)$ with the same set of nodes is defined as $\bigcup_{i=1}^k G_i = G(V, \bigcup_{i=1}^k E_i)$. For convenience, we may use $G$ and $V$ interchangeably. Two graphs are equal if they have the same set of nodes and the same set of edges.
	
	A path is a sequence of nodes $x_1, x_2, x_3, \dots, x_k$ such that $\forall  1\leq i < k: (x_i, x_{i+1}) \in E$. A cycle is a sequence of nodes $x_1, x_2, ..., x_k$ such that $\forall  1\leq i \leq k: (x_i, x_{i+1}) \in E$ where $x_k=x_1$. A path (cycle) is called directed if all of its edges are directed. Node $x$ is called a descendant of node $v$ if there is directed path from $v$ to $x$, and there are no directed paths from $x$ to $v$ in the graph. A chain graph is a graph with no directed cycles, and a chain component is a connected component of a chain graph after removing all its directed edges. An undirected graph is chordal if for every cycle of length four or more in it, there exists an edge which is not a part of the cycle but connects two nodes of the cycle to each other. 
	
	Let $G(V, E)$ be a partially directed graph. The skeleton of $G$ is an undirected graph that we get by replacing all of the directed edges in $E$ by undirected edges. We say node $v\in V$ is separated from node $u$ by set $T \subset V$ if there is no path from $v$ to $u$ in the skeleton of $G[V\backslash T]$, and we call $T$, a $(v,u)$-separator in $G$ \footnote{Please note that the definition of separator here is different from the definition of d-separation in causal Bayesian networks. }. 
	We denote parents, children, and neighbors of node $v\in V$ by $pa_G(v)$, $ch_G(v)$, and $ne_G(v)$, respectively. A perfect elimination ordering (PEO) in a graph $G$ is an ordering of its vertices such that for every vertex $v$, $v$ and its neighbors prior to it in the ordering form a clique. A graph is chordal if and only if it has a perfect elimination ordering \cite{fulkerson1965}.
	
	\subsection{Causal Model}
	A causal DAG $D$ is a DAG with variables $V_1,\cdots,V_n$ where there is a directed edge from $V_i$ to $V_j$ if $V_i$ is a direct cause of $V_j$. A joint probability distribution $P$ over these variable satisfies Markov property with respect to $D$ if any variable is independent of its non-descendants given its parents. A Markov Equivalence Class (MEC) is a set of DAGs with the same Markov property. \citet{Verma_1992} showed that the graphs in an MEC have the same skeleton and the same set of v-structures (induced subgraphs of the form $a\rightarrow b \leftarrow c$). The essential graph of $D$ is defined as a partially directed graph $G(V, E)$ where $E$ is the union of all edge sets of the DAGs in the same MEC as $D$. An essential graph is necessarily a chain graph with chordal chain components \cite{hauser2011characterization}. \citet{Verma_1992} showed that having observational data, essential graph is obtainable by applying four rules (called "Meek" rules) consecutively on the graph, until no more rules are applicable. A \textit{valid} orientation of edges of a chain component is an orientation in which no cycles and no v-structures are formed. An intervention target $I \subseteq V$ is a set of nodes which we intervene on simultaneously. An intervention family $\mathcal{I}$ is a set of intervention targets. Intervention graph $D^{(I)}$ is the DAG we get from $D$ after removing all edges directed towards nodes in $I$. 
	\begin{definition}
	For a set of intervention targets $I$, two DAGs $D_1$ and $D_2$ are called $\mathcal{I}$-Markov Equivalent (denoted with $D_1 \sim_\mathcal{I} D_2$) if they are statistically indistinguishable under intervention targets in $\mathcal{I}$.
	\end{definition}
	\citet{hauser2011characterization} proved that two DAGs $D_1$ and $D_2$ are $\mathcal{I}$-Markov equivalent if and only if $D_1$ and $D_2$ have the same set of v-structures, and $D^{(I)}_1$ and $D^{(I)}_2$ have the same skeleton for every $I \in \mathcal{I} \cup \{\emptyset\}$.
	
	The $\mathcal{I}$-essential graph $\mathcal{E}_\mathcal{I}(D)$ of a DAG $D(V,E)$ is a partially directed graph with the node set $V$ and the edge set equal to the union of all edge sets of the DAGs which are $\mathcal{I}$-Markov equivalent with $D$. $\mathcal{I}$-MEC is defined as the set of all DAGs that are $\mathcal{I}$-Markov equivalent. 
	\begin{definition}
	For undirected chordal chain graph (UCCG) $G(V,E)$ and intervention family $\mathcal{I}$, the \textit{intervention result space} is defined as:
	$$\mathcal{IR}_{\mathcal{I}}(G) = \{\mathcal{E}_{\mathcal{I}}(D): D\in \textbf{D}(G)\},$$
	where $\textbf{D}(G)$ denotes the set of all DAGs inside MEC corresponding to $G$.
	\end{definition}
	 We use $MEC(\mathcal{E}_\mathcal{I}(D))$ to show the set of all DAGs in an $\mathcal{I}$-MEC. Throughout the paper, we assume UCCGs are chain components of observational essential graphs.

	\begin{figure*}
		\begin{center}
			\begin{tikzpicture}[scale=0.4,line width=1pt]
			
			\node[{shape=circle, text=black, minimum size=0.1em}] (v) at  (0,-8) {a};
			
			\tikzset{edge/.style = {->,> = latex'}}
			\node[{shape=circle,draw, fill=gray!40, text=black, minimum size=0.1em}] (v) at  (1,3) {v};
			\tikzset{vertex/.style = {thick, shape=circle,draw,minimum size=3em}}
			\node[vertex] (p) at  (-2,0) {$P$};
			
			\node[vertex] (c) at  (4,0) {$C_R$};
			\node[thick,  draw, ellipse, minimum width=4em, minimum height = 6em] (a) at  (-6,1) {$A_R$};
			\tikzstyle{every node}=[thick,  draw, ellipse, minimum width=10em, minimum height = 3em]
			\node[] (d)  at  (5,-4) {$D_R$};

			\tikzset{edge/.style = {line width=0.1em, -{Latex[length=3mm, width=0.5em]}}}
			
			\draw[edge, bend left] (p) to (v);	
			\draw[edge, bend left] (v) to (c);
			
			\draw[edge] (p) to (c);
			
			\draw[edge, bend right] (p) to (d);
			\draw[edge] (c) to (d);
			
			\draw[line width=0.1em] (p) to (a);
			\end{tikzpicture}
			\hspace{1cm}
			\begin{tikzpicture}[scale=0.5,line width=1pt]
			
			\node[{shape=circle, text=black, minimum size=0.1em}] (v) at  (0,-7) {b};	
			
			\tikzset{edge/.style = {->,> = latex'}}
			\node[{shape=circle,draw, fill=gray!40, text=black, minimum size=0.1em}] (v) at  (1,3) {v};
			\node[{shape=circle,draw, fill=gray!40, text=black, minimum size=0.1em}] (u) at  (0,-1.5) {u};
			\tikzset{vertex/.style = {thick, shape=circle,draw,minimum size=3em}}
			\node[vertex] (p) at  (-3,0) {$P$};
			\node[vertex] (m) at  (-3.5,-4) {M};
			
			\node[vertex] (c) at  (4,0) {$C_R\backslash\{u\}$};
			
			\node[thick,  draw, ellipse, minimum width=4em, minimum height = 6em] (a) at  (-6,1) {$A_R$};
			\tikzstyle{every node}=[thick,  draw, ellipse, minimum width=10em, minimum height = 3em]
			\node[] (d)  at  (5,-4) {$D_R\backslash M$};
			
			\tikzset{edge/.style = {line width=0.1em, -{Latex[length=3mm, width=0.5em]}}}
			
			\draw[edge, bend left] (p) to (v);	
			\draw[edge, bend left] (v) to (c);
			\draw[line width=0.1em] (p) to (u);
			\draw[edge] (u) to (v);
			\draw[edge] (u) to (c);
			\draw[edge] (u) to (v);
			\draw[edge] (u) to (d);
			
			\draw[edge] (p) to (c);
			
			\draw[edge, bend right] (p) to (d);
			\draw[edge] (c) to (d);
			
			\draw[line width=0.1em] (p) to (a);
			\draw[line width=0.1em] (p) to (m);
			\draw[line width=0.1em] (u) to (m);
			\draw[dashed, bend left, ->, shorten >= 1em, shorten <= 1.5em] (d) to (m);
			\draw[dashed, bend left, ->, shorten >= 0.5em, shorten <= 1em] (m) to (a);
			
			\end{tikzpicture}
		\end{center}
		\caption{a) A separation of $R = \mathcal{P}^P_v$ into different sets. b) Constructing $R' = \mathcal{P}^{P \cup \{u\}}_v$ from  $R$ by moving $u$ from $C_R$ to $P$ and $M$ from $D_R$ to $A_R$. An arrow between two sets/nodes means that any edge between them is directed in the corresponding direction. A straight line between two sets/nodes, means that any edge between them is undirected. Dashed lines show how nodes are moved from $D_R$ to $A_R$ upon the construction.} 
		\label{fig:model}
	\end{figure*}
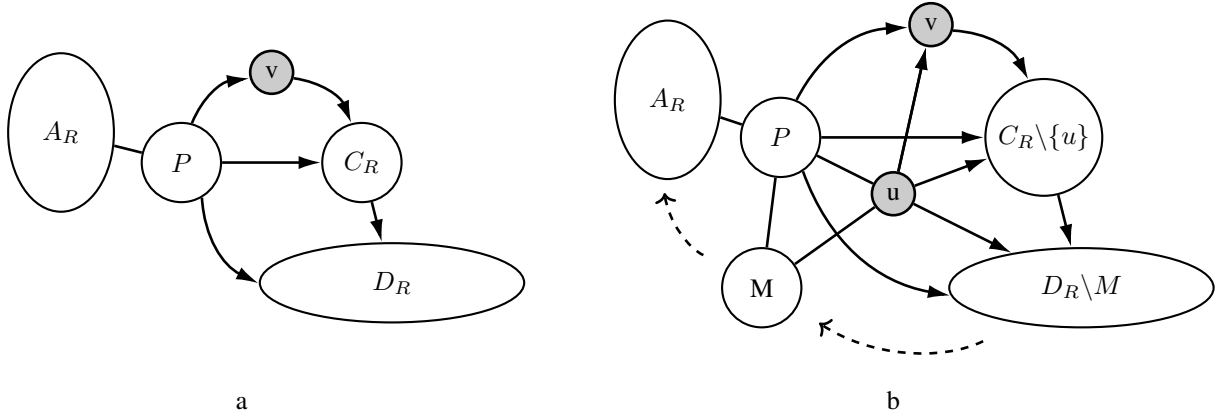
	
	\section{LazyIter}
	
	We first propose a method to select the best single-node intervention target in an essential graph. As $\mathcal{I}$-essential graph $\mathcal{E}_I(D)$ on DAG $D$ is a chain graph with undirected chordal chain components, it could be shown that knowing orientations of edges inside a component does not provide any information about the orientation of edges in other components \cite{hauser2011characterization}. \citet{Hauser_2014} showed that each chain component can be treated as an observational essential graph when it comes to intervening on the nodes (i.e., $\textbf{D}(G)$ is the same set of DAGs, whether $G$ is an observational essential graph or it is a chain component of an $\mathcal{I}$-essential graph). Consequently, we can restrict our attention to UCCGs. \citet{he2008active} presented a method to find $\mathcal{I}$-essential graph from intervention results when the intervention target is the root, which takes $O(n\Delta^2)$ operations. We will use this method in the next sections as a subroutine for computing the size of $\mathcal{I}$-essential graph whenever conditioning on edge orientations results in the intervention target becoming root.
	
	Let $G(V,E)$ be a UCCG and $\{v\}$ be a single-node intervention target on it. After the intervention, we will obtain an $\mathcal{I}$-essential graph $\mathcal{E}_{\{\{v\}\}}(D) \in \mathcal{IR}_{\{\{v\}\}}(G)$ based on the ground truth DAG $D$. The following theorem allows us to use parent set of $v$ for uniquely representing the resulting $\mathcal{I}$-essential graph:

	\begin{proposition}
		\label{thm:onetoone}
		Let $G(V,E)$ be a UCCG, $D \in MEC(G)$ be a DAG, and $v \in V$ be an arbitrary node. Then each $\mathcal{I}$-essential graph $\mathcal{E}_{\{\{v\}\}}(D)$ could be uniquely determined given the parent set of $v$ in $D$, and there is a one-to-one correspondence between sets $\{P \subseteq ne_G(v): P \text{ is a clique}\}$ and $\mathcal{IR}_{\{\{v\}\}}(G)$ .
	\end{proposition}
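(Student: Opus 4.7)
The plan is to exhibit a bijection $\Phi$ that sends each clique $P \subseteq ne_G(v)$ to the $\mathcal{I}$-essential graph of any $D \in MEC(G)$ whose parent set at $v$ equals $P$, and to verify that it is well defined and bijective.

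First I would observe that the parent set of $v$ in any $D \in MEC(G)$, viewed inside $ne_G(v)$, must be a clique: two non-adjacent parents would form a v-structure $p_1 \to v \leftarrow p_2$ in $D$, but v-structures are invariant across Markov equivalence (Verma--Pearl), contradicting that $G$ is a UCCG with no directed edges. This simultaneously supplies the surjectivity half of the correspondence, since every element of $\mathcal{IR}_{\{\{v\}\}}(G)$ arises from some $D \in MEC(G)$ whose parent set at $v$ is such a clique.

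Next, for well-definedness of $\Phi$ and the unique-determination claim, I would invoke the Hauser--B\"uhlmann criterion cited above: $D_1 \sim_\mathcal{I} D_2$ iff they share v-structures and share the skeletons of the intervention graphs $D_1^{(I)}, D_2^{(I)}$ for every $I \in \mathcal{I} \cup \{\emptyset\}$. Two DAGs $D_1, D_2 \in MEC(G)$ with identical $pa(v) = P$ already share skeleton $G$ and v-structures; and the skeleton of $D^{(\{v\})}$ is just $G$ with the edges $\{p,v\}$ for $p\in P$ removed, hence also identical. So $\mathcal{E}_{\{\{v\}\}}(D_1) = \mathcal{E}_{\{\{v\}\}}(D_2)$, which proves the first half of the proposition and confirms that $\Phi(P)$ is independent of the representative. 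Injectivity of $\Phi$ is then immediate, since intervening on $v$ orients every edge at $v$, so $P$ can be read off as the in-neighborhood of $v$ in $\mathcal{E}_{\{\{v\}\}}(D)$.

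The hard part will be showing that $\Phi$ is defined on \emph{every} clique $P$, i.e., that some $D \in MEC(G)$ actually realizes $pa_D(v) = P$. The plan is a perfect-elimination-ordering construction: set $C := P \cup \{v\}$, which is a clique in the chordal graph $G$. Iteratively applying Dirac's theorem (every non-complete chordal graph has two non-adjacent simplicial vertices, so at least one must lie outside the clique $C$) peels off simplicial vertices from $V \setminus C$ one at a time and yields a PEO of $G$ in which $C$ occupies the last $|C|$ positions; inside this final block the PEO condition is automatic, so $v$ can be placed first within $C$. Orienting every edge from its later endpoint to its earlier one in this PEO produces a DAG whose parents at any vertex are its later neighbors, which form a clique by the PEO property (giving no v-structures); at $v$ these later neighbors are exactly $P$. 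The resulting DAG has skeleton $G$ and no v-structures, hence lies in $MEC(G)$, completing the witness construction and therefore the bijection.
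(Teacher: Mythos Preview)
Your proof is correct and follows the same skeleton as the paper's: parent sets must be cliques; equal parent sets at $v$ force $\{\{v\}\}$-Markov equivalence (you unpack the Hauser--B\"uhlmann criterion explicitly, the paper simply cites it); and every clique $P\subseteq ne_G(v)$ is realized as $pa_D(v)$ for some $D\in MEC(G)$. The only substantive difference is in this last existence step: the paper orients $G$ according to a LexBFS ordering started at $(P,v,\ldots)$, relying on the known fact that LexBFS on a chordal graph yields a perfect elimination ordering, whereas you peel simplicial vertices from $V\setminus(P\cup\{v\})$ via Dirac's theorem to obtain an ordering with $P\cup\{v\}$ last and $v$ first inside that block. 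Your construction is a bit more self-contained (no appeal to LexBFS machinery), while the paper's is essentially a one-line citation; both are standard devices for chordal graphs and neither is materially harder. One terminological caution: under the paper's PEO convention (``prior neighbors form a clique'') the simplicial-peeling order you build is actually the \emph{reverse} of a PEO---but your argument is unaffected, since the property you actually use is that each vertex's later neighbors form a clique, which is precisely what iterated simplicial removal guarantees.
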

	The proof of this proposition as well as all other proofs are available in the supplementary material.
	The theorem suggests a way for iterating over $\mathcal{IR}_{\{\{v\}\}}(G)$: Iterate over all cliques in the neighborhood of $v$ and set each clique as the parent set of $v$ and then apply Meek rules to orient as many edges as possible \cite{Hauser_2014}. According to Proposition \ref{thm:onetoone}, the essential graph $\mathcal{E}_{\{\{v\}\}}(D)$ can be determined by $pa_D(v)$. Thus, we use the notation of $\mathcal{P}_v^P(G)$ to point to $\mathcal{E}_{\{\{v\}\}}(D)$ where $D\in \textbf{D}(G)$ is a DAG such that $pa_D(v) = P$.
	
	Let $R(V, E') = \mathcal{P}^P_v(G)$ be a possible single-node-intervention result on a UCCG $G(V, E)$. Setting aside the nodes in $P$, we divide the other nodes of $R$ into three distinct groups $C_R$, $A_R$, and $D_R$. $C_R$ is the set of children of $v$, and $C_R \cup P = ne_G(v)$. $A_R$ is the set of all nodes which are separated from $v$ by $P$, and $D_R$ is the set of all other nodes. We have:
	$$A_R =  \{a \in V \backslash ne_G(v): P \text{ is an } (a,v)\text{-separator in }G\}$$
	$$D_R = V\backslash (A_R \cup C_R \cup P).$$
	
	 See Figure \ref{fig:model}a for an illustration.
	 The following theorem states several key properties of the three proposed node groups.
	
	\begin{theorem}
		\label{thm:properties}
		Let $R=\mathcal{P}^P_v(G)$ be an $\mathcal{I}$-essential graph on a UCCG $G(V,E)$. The following statements hold:
		\begin{itemize}
			\item There are no edges in $G$ connecting a node in $A_R$ to a node in $C_{R} \cup D_{R} \cup \{v\}$.
			\item Every edge $(a,b)$ in ${R}$ where $a \in P$ and $b \in C_{R}$ is directed as $a\rightarrow b$.
			\item Every edge $(a,b)$ in ${R}$ where $a \in C_{R} \cup P$ and $b \in D_{R}$ is directed as $a\rightarrow b$.
			\item All of the edges in $R[A_{R} \cup P]$ are undirected. 
		\end{itemize}
	\end{theorem}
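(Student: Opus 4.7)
The plan is to prove each bullet in turn, using the definitions of $A_R, C_R, D_R$ together with the Meek-rule characterization of the $\mathcal{I}$-essential graph $R$: one starts from the skeleton $G$, orients every edge incident to $v$ via the intervention (giving $P\to v$ and $v\to C_R$), and then propagates with Meek's rules R1--R4 until closure.

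For the first bullet I would argue by contradiction. An edge in $G$ between $a\in A_R$ and any $b\in\{v\}\cup C_R\cup D_R$ yields a path from $a$ to $v$ in $G[V\setminus P]$ (directly if $b=v$, which also contradicts $a\notin ne_G(v)$; via $a-b-v$ if $b\in C_R$; via $a-b-\cdots-v$ if $b\in D_R$, using that $D_R$ is connected to $v$ off $P$ by definition), contradicting that $P$ separates $a$ from $v$.

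For bullets two and three I would apply Meek's rules. Bullet 2 is immediate from R2 on $a\to v\to b$ together with the undirected edge $a-b$, for $a\in P$, $b\in C_R$. Bullet 3 with $a\in C_R$ follows from R1 on $v\to a-b$, since $b\in D_R\subseteq V\setminus ne_G(v)$ gives $v\not\sim b$. Bullet 3 with $a\in P$ is more involved: I would first show by induction on the distance $d_{G[V\setminus P]}(b,v)$ that every $b\in D_R$ is a descendant of $v$ in $R$ along a shortest $v$-$b$ path (the inductive step is R1 together with shortest-path non-adjacency $x_{i-1}\not\sim x_{i+1}$, noting by bullet 1 that such a path stays within $\{v\}\cup C_R\cup D_R$). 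Then $a\to v\to\cdots\to b$ combined with the undirected edge $a-b$ forces $a\to b$ in every DAG of the $\mathcal{I}$-MEC by acyclicity, hence in $R$.

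For the fourth bullet, my plan is to maintain throughout Meek propagation the invariant that no directed edge of $R$ has head in $A_R\cup P$, which immediately prevents R1--R4 from orienting any edge inside $R[A_R\cup P]$, since each rule requires an existing arrow pointing into the common endpoint of the newly oriented arrow. The intervention supplies only $P\to v$ and $v\to C_R$, with heads in $\{v\}\cup C_R$; bullets 2 and 3 yield heads in $C_R\cup D_R$; and bullet 1 cuts $A_R$ off from the rest of $G$, so arrows into $A_R$ cannot arise from outside. The main obstacle is exactly this R1--R4 case analysis; if it becomes unwieldy, a cleaner alternative is to use the chordality of the induced subgraph $G[A_R\cup P]$ to exhibit, for any edge $a-b$ inside it, two DAGs in the $\mathcal{I}$-MEC disagreeing on that edge's orientation, which directly forces it to be undirected in $R$.
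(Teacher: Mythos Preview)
Your proposal is correct and follows essentially the same approach as the paper's proof: the same separation-based contradiction for the first bullet, acyclicity/R2 for the second, R1 plus the shortest-path induction through $\{v\}\cup C_R\cup D_R$ for the third, and the ``no arrowhead lands in $A_R\cup P$'' invariant for the fourth. The only cosmetic differences are that the paper phrases bullets~2 and~3 directly in terms of forbidden cycles and v-structures rather than naming Meek rules R1/R2, and that you are slightly more explicit than the paper about why the invariant in bullet~4 is preserved under Meek propagation (the paper simply asserts that every Meek rule needs an existing arrow into one endpoint and leaves the rest implicit).
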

	
	\begin{algorithm*}[tb]
		\caption{LazyIter}
		\label{alg:LazyIter}

		\begin{algorithmic}[1] 
			\STATE \textbf{Input:} UCCG $G(V,E)$, Node $v\in V$ \\
			\STATE \textbf{Output:} $\mathcal{IR}_{\{\{v\}\}}(G)$
			\STATE  $\mathcal{L} \leftarrow \emptyset$
			\STATE Find $\mathcal{P}^{\emptyset}_v(G)$ by setting $v$ as the root of $G$ and orienting as much edges as possible.
			\STATE $Iter(\mathcal{P}^{\emptyset}_v(G),v)$
			\STATE \textbf{return $\mathcal{L}$} \newline ---------------------------------------------------------------------------------------------------------------------------------------------
			\STATE \textbf{function} $Iter(\mathcal{P}^{P}_v(G),v)$
			\STATE Add $\mathcal{P}^{P}_v(G)$ to $\mathcal{L}$.
			\FOR {$u \in C_R$} 
			\IF {$u$ is connected to all nodes in $P$}
			\STATE $NewGraph \longleftarrow \mathcal{P}^{P}_v(G)$
			\STATE $M \longleftarrow $ Set of all nodes separated from $v$ by $P \cup \{u\}$ in $G[V \backslash A_{\mathcal{P}^{P}_v(G)}]$
			\STATE Change direction of $v \rightarrow u$ to $u \rightarrow v$ in $NewGraph$
			\STATE In $NewGraph$, make all edges with both ends in $M \cup \{u\}$ undirected
			\STATE In $NewGraph$, make all edges connecting a node in $P$  to a node in $M\cup\{u\}$ undirected
			\STATE In $NewGraph$, direct all edges between $u$ and a node $c\in C_R$ as $u\rightarrow c$
			\STATE In $NewGraph[C_{\mathcal{P}^{P}_v(G)}]$, orient edges using Meek rules until no more undirected edges are orientable
			\STATE $Iter(NewGraph, v)$
			\ENDIF
			\ENDFOR
			
		\end{algorithmic}
	\end{algorithm*}
	
	Now we show that direction of many edges in $\mathcal{P}^P_v(G)$ stay intact when we change the parent set $P$ slightly, and therefore if we already know direction of edges in an $\mathcal{I}$-essential graph, we can find the direction of edges in other $\mathcal{I}$-essential graphs by reorienting just a small fraction of the edges.

	Assume we are given $R$ and we want to find $R' = \mathcal{P}^{P \cup \{u\}}_v(G)$ where $u \in C_R$, and $G[P \cup \{u\}]$ is a clique. Note that the skeleton of both $R'$ and $R$ is $G$, and they only differ in the direction of some edges.
	
	It is easy to see $A_R \subseteq A_{R'}$ as every node which is separated from $v$ by $P$ is also separated from $v$ by $P \cup \{u\}$. Moreover we know that $A_R \cup D_R = A_{R'} \cup D_{R'}$ as both of them represent the set of nodes in $V \backslash (ne_G(v) \cup \{v\})$. Consequently, we have $D_{R'} \subseteq D_R$ and $D_R \backslash D_{R'} = A_{R'} \backslash A_{R} =M$.
	We construct $R'$ from $R$ by moving $u$ from children to the parents and $M$ from $D_R$ to $A_R$, and then reorienting some specific edges as we explain. We have:
	$$M = \{a \in D_R:\ P\cup\{u\} \text{ is an } (a,v)\text{-separator in } G\}$$
	$$C_{R'} = C_R \backslash \{u\}, A_{R'} = A_R \cup M, D_{R'} = D_R \backslash M.$$
	
	Applying the first statement of Theorem \ref{thm:properties} to $R'$, we conclude that there are no edges between $M$ and $D_R \backslash M$ in G (as $M\subseteq A_{R'}$ and $D_R\backslash M = D_{R'})$. 
	The third statement of Theorem \ref{thm:properties} implies that in $R'$, any edge between $(P \cup \{u\}) \cup C_{R'} = P \cup C_R$ and $D_{R'} = D_R \backslash M$ is directed towards the node in $D_R \backslash M$. The same thing is true in $R$, as we have $D_R \backslash M \subseteq D_R$. This means any edge in $R[D_R \backslash M]$ which is directed by applying Meek rules, can be similarly directed in $R'[D_R \backslash M]$, and therefore $R'[D_R \backslash M] = R[D_R\backslash M]$.
	Moreover, we can say that $R'[A_R\cup P] = R[A_R\cup P]$ because both are undirected graphs on the same skeleton. Using the fourth statement of Theorem \ref{thm:properties}, we can infer that all of edges in $R'[M \cup \{u\}]$ are undirected, as $M \cup \{u\} \subseteq A_{R'}$. The same is true for edges with one end in $M\cup \{u\}$ and the other end in $P$.
	Finally, by the second statement of Theorem \ref{thm:properties}, all of the edges in $R'[C_R]$ which are connected to $u$ are directed away from $u$. 
	This means we can find the orientation of edges in $R'$ by executing the following three steps on $R$:
	\begin{enumerate}
		\item Obtain the set $M$ by finding nodes in $G[V\backslash A_R]$ which are separated from $v$ by $(P \cup \{u\})$. If we execute a breadth first search (BFS) in $G[V\backslash (P \cup \{u\})]$ with $v$ as root, the nodes which are not observed in the BFS constitute $A_{R'}$. By removing nodes of $A_R$ from $A_{R'}$ we will get the set $M$. This will take $O(n + m) = O(n + n\Delta) = O(n\Delta)$ operations, where $n,m$, and $\Delta$ are the number of variables, the number of the edges, and the maximum degree of the graph respectively.
		\item Remove the directions of all edges inside $R[M\cup\{u\}]$ and all edges between $M\cup\{u\}$ and $P$. This could be done in $O(n\Delta)$ operations.
		\item Direct all edges $u-x$ in $R[C_R]$ as $u\rightarrow x$, and apply Meek rules on $R[C_R]$ to find $R'[C_R\backslash\{u\}]$. This could be done in $O(\Delta^3)$ operations \cite{he2015counting}, as we have $|C_R|\leq \Delta$.		
	\end{enumerate}
	The procedure for finding $IR_{\{\{v\}\}}(G)$ is given in Algorithm \ref{alg:LazyIter}. In order to find $IR_{\{\{v\}\}}(G)$, first we obtain $\mathcal{P}^{\emptyset}_v(G)$ by setting $v$ as the root of the graph and directing edges based on Meek rules in $O(n\Delta^2)$ operations \cite{he2015counting}. Then we initiate $\mathcal{L}$ as an empty set and call $LazyIter(\mathcal{P}^{\emptyset}_v(G), v)$ which will add all desired $\mathcal{I}$-essential graphs to set $\mathcal{L}$ (for finding $\mathcal{P}^{\emptyset}_v(G)$ we can use the algorithm presented in \cite{he2015counting} which needs $O(n\Delta ^2)$ operations). The algorithm will call itself recursively $O(2^\Delta)$ times, and the three mentioned operations are executed in each call in order to find the new $\mathcal{I}-$essential graph corresponding to the new parent set. When the execution is completed, $\mathcal{L}$ will contain the list of all obtainable $\mathcal{I}$-essential graphs. The complexity of the algorithm is $O(n\Delta^2 + 2^\Delta(n\Delta + \Delta^3)) = O(2^\Delta(n\Delta + \Delta^3))$. The first step is executed in line 12 of Algorithm \ref{alg:LazyIter}, the second step is executed in lines 14 and 15, and the last step is executed in lines 16 and 17.

	\section{Computing size of MEC}
	We count the number of DAGs inside an MEC by partitioning them into $\mathcal{I}$-Markov equivalence classes.  
	
	\begin{lemma}
	    \label{lemma:sigmapi}
	    	Let $G(V,E)$ be a UCCG and $\mathcal{I}$ be an arbitrary intervention family. Then we have:
	    	\begin{equation*}
    	        |MEC(G)| = \sum_{R \in \mathcal{IR}_{\mathcal{I}}(G)}\Big[ \prod_{C \in \mathcal{C}(R)}|MEC(C)|\Big],
	    	\end{equation*}
	    	where $\mathcal{C}(R)$ denotes the set of all chain components of $R$.
		\end{lemma}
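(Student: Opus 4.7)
The plan is to partition $\mathbf{D}(G)$ by the $\mathcal{I}$-Markov equivalence relation and then count the DAGs in each class using the chain-component structure of the corresponding $\mathcal{I}$-essential graph.

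First, I would observe that every DAG $D \in \mathbf{D}(G)$ determines a unique $\mathcal{I}$-essential graph $\mathcal{E}_\mathcal{I}(D)$, and two DAGs lie in the same $\mathcal{I}$-MEC iff they produce the same $\mathcal{I}$-essential graph. Hence $\mathbf{D}(G)$ is partitioned as a disjoint union
\begin{equation*}
\mathbf{D}(G) \;=\; \bigsqcup_{R \in \mathcal{IR}_{\mathcal{I}}(G)} \{D \in \mathbf{D}(G) : \mathcal{E}_\mathcal{I}(D) = R\},
\end{equation*}
so $|MEC(G)| = \sum_{R \in \mathcal{IR}_{\mathcal{I}}(G)} N(R)$, where $N(R)$ is the number of DAGs $D \in \mathbf{D}(G)$ whose $\mathcal{I}$-essential graph is $R$. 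The remaining task is to show $N(R) = \prod_{C \in \mathcal{C}(R)} |MEC(C)|$.

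Next, I would use the fact recalled in the text that an $\mathcal{I}$-essential graph is a chain graph with chordal undirected chain components. The DAGs $D$ with $\mathcal{E}_\mathcal{I}(D) = R$ are exactly those obtained from $R$ by orienting its undirected edges without introducing any directed cycle or new v-structure; equivalently, by giving each chain component $C \in \mathcal{C}(R)$ a valid orientation. Because there are no undirected edges between distinct chain components and every directed edge between them is already fixed by $R$, a valid orientation of one component places no constraint on any other: directed cycles across components are prevented by the chain structure of $R$, and v-structures are local to a node's neighbors, all of which lie in the same chain component as the collider. So the orientations factor independently across components, giving
\begin{equation*}
N(R) \;=\; \prod_{C \in \mathcal{C}(R)} (\text{number of valid orientations of } C).
\end{equation*}

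Finally, for a chordal undirected chain component $C$, the number of valid orientations is exactly $|MEC(C)|$: each DAG in $MEC(C)$ restricts to an acyclic, v-structure-free orientation of $C$, and conversely each such orientation is a DAG in $MEC(C)$ (same skeleton, same empty set of v-structures). Combining these pieces yields the claimed identity. The main technical point is the independence-across-chain-components argument, which requires invoking both the chain-graph structure of $\mathcal{I}$-essential graphs (so cross-component directed cycles are impossible) and the locality of v-structures (so no v-structure can straddle two components); once that is in hand the rest is bookkeeping.
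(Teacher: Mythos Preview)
Your proof is correct and follows the same two-step decomposition as the paper: first partition $\mathbf{D}(G)$ into $\mathcal{I}$-MECs indexed by $\mathcal{IR}_{\mathcal{I}}(G)$, then factor each $\mathcal{I}$-MEC as a product over its chain components (the paper records these as two sub-lemmas and cites Hauser and B\"uhlmann for the chain-component independence rather than arguing it directly). One minor imprecision worth tightening: your claim that all neighbors of a potential collider lie in its chain component is not literally true, since directed edges of $R$ may enter the collider from outside; the actual reason no new cross-component v-structure can be created when orienting an undirected edge $b\,{-}\,c$ is that $R$, being an $\mathcal{I}$-essential graph, is already closed under Meek's rule~1, so whenever $a\to b$ is directed in $R$ and $b\,{-}\,c$ is undirected, $a$ and $c$ must be adjacent.
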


	\begin{algorithm}[tb]
		\caption{LazyCount}
		\label{alg:LazyCount}
		\begin{algorithmic}[1]
			\STATE \textbf{Input:} UCCG $G(V,E)$ \\
			\STATE \textbf{Output:} $|MEC(G)|$ \\
			\STATE $CountDP[] \leftarrow$ A storage indexed on $S \subseteq V$ and initiated by 1 if $|S|=1$ and NULL otherwise.
			\STATE \textbf{return $Count(V)$} \newline ------------------------------------------------------------
			\STATE \textbf{function} $Count(S)$
			\IF {$CountDP[S]$ is not $NULL$}
			\STATE return $DP[S]$
			\ENDIF
			\STATE $CountDP[S] \leftarrow 0$
			\STATE $v \leftarrow $ an arbitrary node in $S$
			\STATE $\mathcal{L} \leftarrow LazyIter(G[S], v)$
			\FOR {$R \in \mathcal{L}$}
			\STATE $num \leftarrow 1$
			\FOR {$C(S', E') \in \mathcal{C}(R)$}
			\STATE $num \leftarrow num \times Count(S')$
			\ENDFOR
			\STATE $CountDP[S] \leftarrow CountDP[S] + num$
			\ENDFOR
			\STATE return $CountDP[S]$
		\end{algorithmic}
	\end{algorithm}
	
	 Assume we are given a UCCG $G(V, E)$ and want to calculate $|MEC(G)|$. We first choose an arbitrary node $v \in V$,  set $\mathcal{I} = \{\{v\}\}$, and use $LazyIter$ to find all of the $\mathcal{I}$-essential graphs. Then for each of them, we calculate the number of DAGs inside its corresponding $\mathcal{I}$-MEC by multiplying size of its chain components. As each chain component of an $\mathcal{I}$-essential graph is a UCCG \cite{Hauser_2014}, Lemma \ref{lemma:sigmapi} is applicable on it and we could do the calculation recursively. Finally, we sum up all these values to get $|MEC(G)|$.

	We take advantage of dynamic programming to eliminate repetitive calculations. \citet{Ghassami_2019, DBLP:conf/aaai/TalvitieK19} used a similar idea for observational essential graphs, which we extended to interventional cases.

	The algorithm is presented in Algorithm \ref{alg:LazyCount}. Every time $Count(S)$ is called, it will take $O(1)$ operations if $DP[S]$ is already calculated. Otherwise, it calls $LazyIter$ once which takes $O(2^\Delta(n\Delta + \Delta^3))$ operations, and executes the two for-loops. The outer for-loop is executed at most $2^\Delta$ times, and the inner for-loop is executed at most $n$ times. Calculation of $\mathcal{C}(\mathcal{E})$ could also be done in $O(n\Delta)$ steps. After these calculations, $DP[S]$ will be saved and there is no need to calculate it in later calls. On the other hand, there are at most $2^n$ values for index of $DP$, and therefore the time complexity of Algorithm \ref{alg:LazyCount} is:
		$$O\Big(2^n \big(2^\Delta(n\Delta + \Delta^3) + 2^\Delta (n + n\Delta) \big)\Big) = O(2^n 2^\Delta(n\Delta + \Delta^3)).$$

	\begin{algorithm}[tb]
		\caption{Active Learning by Minimizing $\mathcal{I}$-MEC size}
		\label{alg:activemec}
		\begin{algorithmic}[1]
			\STATE \textbf{Input:} UCCG $G(V,E)$
			\STATE \textbf{Output:} A single-node intervention target $\{v_{opt}\}$
			\STATE $CountDP[] \leftarrow$ A storage indexed on $S \subseteq V$ and initiated by 1 if $|S|=1$ and NULL otherwise.
			\STATE $s_{opt} \leftarrow 0$ 
			\STATE $v_{opt} \leftarrow NULL$

			\FOR {$v \in V$}
			\STATE $\mathcal{L} \leftarrow LazyIter(G,v)$
			\STATE $s_v \leftarrow 0$
			\FOR {$R \in \mathcal{L}$}
			\STATE $mecsize \leftarrow 1$
			\FOR {$C(V', E') \in \mathcal{C}(R)$}
			\STATE $mecsize \leftarrow mecsize \times Count(V')$
			\ENDFOR
			\STATE $s_v \leftarrow max(s_v, mecsize)$
			\ENDFOR
			\IF {$s_v < s_{opt}$}
			\STATE $s_{opt} \leftarrow s_v$
			\STATE $v_{opt} \leftarrow v$
			\ENDIF
			\ENDFOR
			\STATE return    $v_{opt}$
		\end{algorithmic}
	\end{algorithm}
	
	\section{Experiment Design}
	Assume we want to find the best intervention target $I \subseteq V$ in UCCG $G(V,E)$. For experiment design, given an objective function, we need to compare the efficiency of different intervention targets based on it. A common objective function is the size of $\mathcal{I}$-essential graph obtained after intervention \cite{Ghassami_2019}. The smaller the class is, the more information we have gained from the intervention. If we consider the worst-case setting, we have:
	\begin{equation}
	\label{exp:mecsize}
	I_{opt} = \text{arg}\min_{I \subseteq V}\Big(\max_{R \in \mathcal{IR}_{\{I\}}(G)} |MEC(R)|\Big).
	\end{equation}	
	Another objective function used in previous work is the number of directed edges after an intervention \cite{ghassami2018budgeted, Hauser_2014}:
	\begin{equation}
	I_{opt} = \text{arg}\max_{I \subseteq V}\Big(\min_{R \in \mathcal{IR}_{\{I\}}(G)} |Dir(R)|\Big),
	\label{exp:edgecount}
	\end{equation}
	We solve the experiment design problem for both of these objective functions, in both active and passive learning settings.
	\subsection{Active Learning}
	In the active learning, the information obtained from the former interventions can be used to choose the next targets. Similar to the approach taken in \citet{Hauser_2014}, we aim to find the best single-node intervention target in each learning step. We take advantage of $LazyIter$ and $LazyCount$ for this purpose. 
	
	Let $G(V,E)$ be a UCCG. Considering objective function \eqref{exp:mecsize}, we want to find a node $v$ such that intervening on it, minimizes the size of the resulting $\mathcal{I}$-MEC. We first use $LazyIter$ to find the set of all $\mathcal{I}$-essential graphs for different single-node intervention targets. Then, for each $\mathcal{I}$-essential graph $\mathcal{P}^P_v(G)$, we obtain the size of its corresponding $\mathcal{I}$-MEC by multiplying sizes of its chain components. Finally, we use these values to find the optimal intervention target. The description of this algorithm is presented in Algorithm \ref{alg:activemec}. The procedure is almost the same for objective function \eqref{exp:edgecount}. We just need to calculate number of directed edges for each $\mathcal{I}$-essential graph, instead of calculating its $\mathcal{I}$-MEC size.
	
	All of the operations in Algorithm \ref{alg:activemec} could be divided to two parts:
	\begin{itemize}
	    \item Calculating the values of $CountDP[]$ using function $Count()$, which takes at most $O(2^n2^\Delta(n\Delta + \Delta^3))$ operations.
	    \item Iterating over the three for-loops (taking $n$, $2^\Delta$, and $n$ steps respectively), calling $LazyIter$ (taking $O(2^\Delta(n\Delta + \Delta^3))$ operations), and calculating $\mathcal{C}(R)$ (taking $O(n\Delta)$ operations). All of these steps together need $O(n2^\Delta(n\Delta + \Delta^3))$ operations.
	\end{itemize}
	Therefore Algorithm \ref{alg:activemec} calculates the MEC size in at most $O(2^n2^\Delta(n\Delta+\Delta^3)) + O(n2^\Delta(n\Delta + \Delta^3)) = O(2^n2^\Delta(n\Delta+\Delta^3))$ operations. If we want to find the best target with respect to objective function \eqref{exp:edgecount}, there is no need to calculate $CountDP[]$, but all other operations should be executed similarly. Consequently, the time complexity in this case would be $O(n2^\Delta(n\Delta + \Delta^3))$.

	\subsection{Passive Learning}
	
	Let $G(V,E)$ be a UCCG, where each node $v \in V$ is assigned a cost $c_v$. We aim to find a set of $k$ single-node interventions, and therefore our intervention family is of the form $\mathcal{I} = \{\{v_1\}, \{v_2\}, ..., \{v_k\}\}$, similar to the model considered in \citet{ghassami2018budgeted}. 
	Using the following lemma, we break the problem down to smaller subproblems and take advantage of dynamic programming:
	
	\begin{figure*}[tb]
	\centering
		\begin{tikzpicture}[scale=0.95]
		\node[{shape=circle, text=black, minimum size=0.1em}] (v) at  (3.5,-1.5) {a};
		\begin{axis}[
		xlabel=number of edges,
		ylabel=time(s),
		grid=both,
		ytick = {0,500,...,2500},
		minor ytick={0,250,...,2500},
		xtick={150,175,...,300},
		legend style={at={(0.5,0.9)}, legend cell align=left} %
		]
		\addplot[mark=*,cyan] 
		plot coordinates {
		    (150, 2.273839803)
		    (175, 5.957834258)
			(200, 13.33884534)
			(225, 34.69326311)
			(250, 87.35944125)
			(275, 227.8270146)
		};
		\addlegendentry{LazyIter}
		
		\addplot[color=red,mark=triangle]
		plot coordinates {
		    (150, 3.28371497)
		    (175, 8.18834094)
			(200, 51.4337895)
			(225, 250.647167)
			(250, 672.986144)
			(275, 2015.85610)
		};
		\addlegendentry{Hauser2014}
		\end{axis}
		\end{tikzpicture}
		\begin{tikzpicture}[scale=0.95]
		\node[{shape=circle, text=black, minimum size=0.1em}] (u) at  (3.5,-1.5) {b};
		\begin{axis}[
		xlabel=number of edges,
		grid=both,
		minor ytick={0,500,...,2000},
		minor xtick={225,250,...,375},
		legend style={at={(0.5,0.9)}, legend cell align=left} %
		]
		\addplot[color=cyan,mark=*]
		plot coordinates {
            (225, 2.3)
            (250, 5.05)
			(275, 11.55)
	        (300, 30.7)
			(325, 92.81)
			(350, 256.54)
			(375, 889)
		};
		\addlegendentry{LazyCount}
		
		\addplot[mark=triangle,red] plot coordinates {
            (225, 2.55)
            (250, 5.34)
			(275, 15.22)
	        (300, 48.3)
			(325, 141)
			(350, 443)
			(375, 1997)
		};
		\addlegendentry{MemoMAO}
		\end{axis}
		\end{tikzpicture}

		\begin{tikzpicture}[scale=0.7]
		\node[{shape=circle, text=black, minimum size=0.1em}] (u) at  (3.5,-1.5) {c};
		\begin{axis}[
		xlabel=graph order (n),
		ylabel= edge discovery ratio,
		grid = both,
		legend style={at={(0.55,0.55)}, legend cell align=left} %
		]
		\addplot[color=cyan,mark=*]
		plot coordinates {
			
			(10, 0.73) 
			(15, 0.71) 
			(20, 0.70) 
			(25, 0.69) 
			(30, 0.68) 
			(35, 0.68)
		};
		\addlegendentry{Our Method}
		
		\addplot[color=green,mark=square]
		plot coordinates {
			
			(10, 0.45) 
			(15, 0.43) 
			(20, 0.41) 
			(25, 0.40) 
			(30, 0.38) 
			(35, 0.38)
		};
		\addlegendentry{Random}
		
		\addplot[color=red,mark=triangle]
		plot coordinates {	
			
			(10, 0.70) 
			(15, 0.68) 
			(20, 0.64) 
			(25, 0.60) 
			(30, 0.56) 
			(35, 0.53)
			
		};
		\addlegendentry{MaxDegree}
		
		\end{axis}
		\end{tikzpicture}
		\begin{tikzpicture}[scale=0.7]
		\node[{shape=circle, text=black, minimum size=0.1em}] (u) at  (3.5,-1.5) {d};
		\begin{axis}[
		xlabel=edge density (r),
		grid = both,
		legend style={at={(0.7,0.65)}, legend cell align=left} %
		]
		\addplot[color=cyan,mark=*]
		plot coordinates {
			
			(0.1, 0.7312348668280872)
			(0.2, 0.7815126050420168)
			(0.3, 0.776083467094703)
			(0.4, 0.7575030012004802)

		};
		\addlegendentry{Our Method}
		
		\addplot[color=green,mark=square]
		plot coordinates {
			(0.1, 0.3423244552058111)
			(0.2, 0.44243697478991595)
			(0.3, 0.4647833065810594)
			(0.4, 0.478343337334934)
			
		};
		\addlegendentry{Random}
		
		\addplot[color=red,mark=triangle]
		plot coordinates {	
			
			(0.1, 0.5786924939467313)
			(0.2, 0.6866746698679471)
			(0.3, 0.687800963081862)
			(0.4, 0.6656662665066025)
			
		};
		\addlegendentry{MaxDegree}
		
		\end{axis}
		\end{tikzpicture}
		\begin{tikzpicture}[scale=0.7]
		\node[{shape=circle, text=black, minimum size=0.1em}] (u) at  (3.5,-1.5) {e};
		\begin{axis}[
		xlabel=budget (b),
		grid=both,
		minor xtick={1,2,3},
		legend style={at={(0.97,0.28)}, legend cell align=left} %
		]
		\addplot[color=cyan,mark=*]
		plot coordinates {

			(1, 0.5708180708180709)
			(2, 0.7148962148962149)
			(3, 0.7747252747252746)

		};
		\addlegendentry{Our Method}
		
		\addplot[color=green,mark=square]
		plot coordinates {
			(1, 0.22796703296703294)
			(2, 0.35330891330891334)
			(3, 0.4843894993894994)
			
		};
		\addlegendentry{Random}
		
		\addplot[color=red,mark=triangle]
		plot coordinates {	
			
			(1, 0.44017094017094016)
			(2, 0.6043956043956044)
			(3, 0.6788766788766789)
			
		};
		\addlegendentry{MaxDegree}
		
		\end{axis}
		\end{tikzpicture}

		\caption{(a) Comparison between execution times of LazyIter and algorithm in \citet{Hauser_2014} versus number of edges for graphs with 30 nodes. (b) Comparison between execution times of LazyCount and MemoMAO \cite{DBLP:conf/aaai/TalvitieK19} versus number of edges for graphs with 30 nodes. Comparison between edge discovery ratio versus (c) graph order for $b=2$ and $r=0.4$, (d) edge density for $n=35$ and $b=3$, (e) budget for $n=40$ and $r=0.3$} 
		\label{experiments}
	\end{figure*}
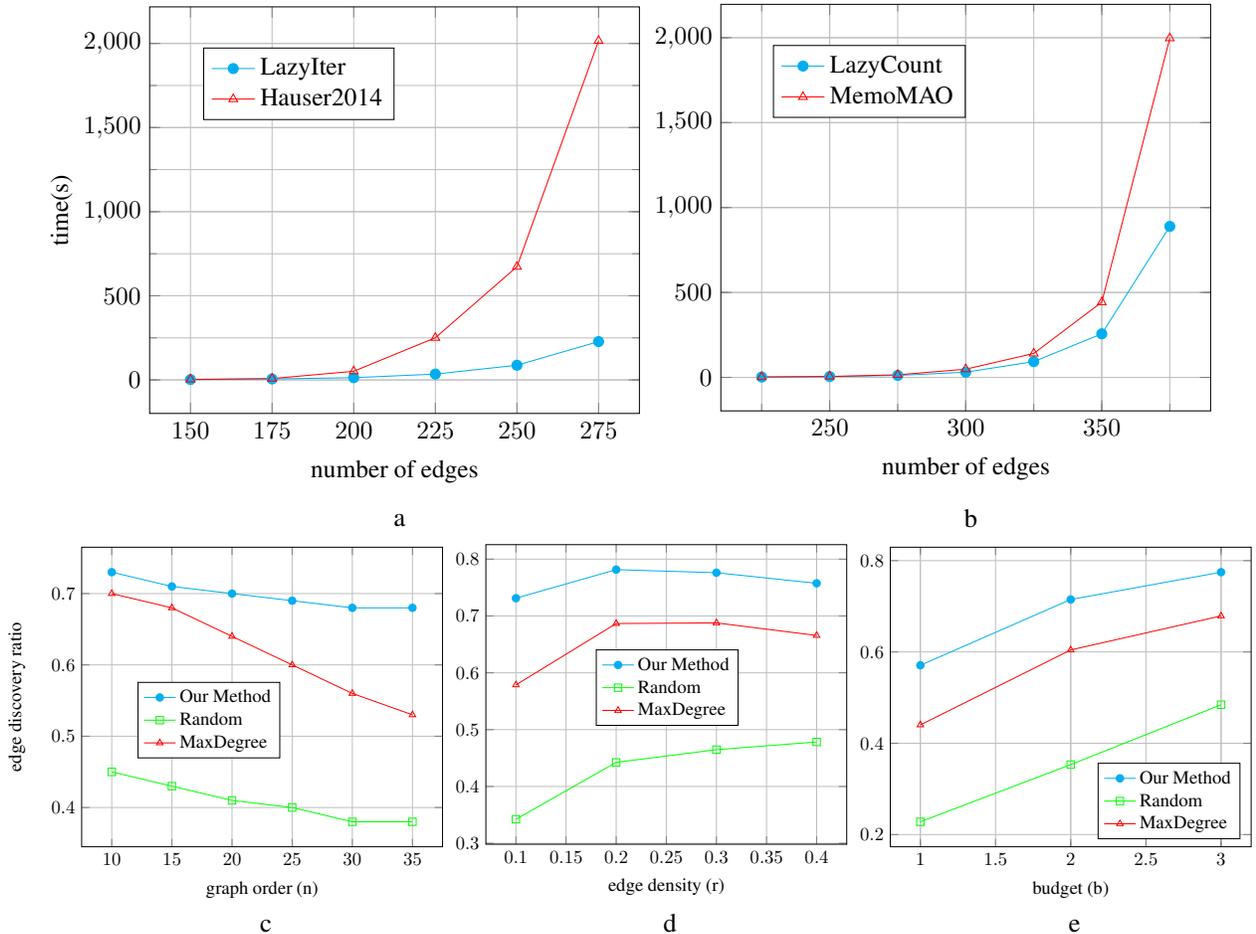
	
	\begin{lemma}
		\label{lemma:limitToComp}
		Let $G(V,E)$ be a UCCG, $\mathcal{I} = \{\{v_1\},\{v_2\}, ..., \{v_k\}\}$ an intervention family, $D$  the ground truth DAG of $G$, and for each chain component $C \in \mathcal{C}(\mathcal{E}_{\{\{v_1\}\}}(G))$, $\{v^{C}_1, v^{C}_2, ..., v^{C}_{m_C}\} \subseteq V$ be the subset of intervention targets which are inside $C$. Then we have:
		$$Dir(\mathcal{E}_{\{\{v_1\}, \{v_2\}, ..., \{v_k\}\}}(D)) = Dir(\mathcal{E}_{\{\{v_1\}\}}(D)) \cup \mathcal{Z},$$
		where 
		$$\mathcal{Z}=\bigcup_{C \in \mathcal{C}(\mathcal{E}_{\{\{v_1\}\}}(D))} Dir(\mathcal{E}_{\{\{v^{C}_1\}, \{v^{C}_2\}, ..., \{v^{C}_{m_C}\}\}}(D[C])).$$
	\end{lemma}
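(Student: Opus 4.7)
The plan is to prove the equality by establishing both inclusions, relying on Hauser's characterization of $\mathcal{I}$-Markov equivalence ($D_1 \sim_\mathcal{I} D_2$ iff they share a skeleton, the same v-structures, and the same skeleton of $D^{(I)}$ for every $I \in \mathcal{I} \cup \{\emptyset\}$) together with two structural facts about $\mathcal{E}_{\{\{v_1\}\}}(D)$: it is a chain graph whose chain components $C$ induce UCCGs $G[C]$, and all of its directed edges are cross-component (any directed edge inside a chain component together with the undirected path through the component would yield a semi-directed cycle). Consequently every DAG in $MEC(\mathcal{E}_{\{\{v_1\}\}}(D))$ is obtained by choosing, independently for each $C$, a DAG in $MEC(G[C])$, and in particular $D[C]$ itself has no v-structures. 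Write $\mathcal{I}_C := \{\{v^C_j\}: v^C_j \in C\}$.

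For $Dir(\mathcal{E}_{\{\{v_1\}\}}(D)) \cup \mathcal{Z} \subseteq Dir(\mathcal{E}_\mathcal{I}(D))$: any $e \in Dir(\mathcal{E}_{\{\{v_1\}\}}(D))$ stays directed because $MEC(\mathcal{E}_\mathcal{I}(D)) \subseteq MEC(\mathcal{E}_{\{\{v_1\}\}}(D))$. For $e \in Dir(\mathcal{E}_{\mathcal{I}_C}(D[C]))$, I would show that for every $D' \sim_\mathcal{I} D$ the restriction $D'[C]$ is $\mathcal{I}_C$-Markov equivalent to $D[C]$, which forces $D'$ and $D$ to agree on $e$. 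Same skeleton is immediate; v-structures in $D[C]$ are exactly the v-structures of $D$ supported inside $C$, and similarly for $D'$, so these agree; and for $v^C_j \in C$ we have $D[C]^{(v^C_j)} = D^{(v^C_j)}[C]$, so equality of the skeletons of $D^{(v^C_j)}$ and $D'^{(v^C_j)}$ restricts to the required equality on $C$.

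For $Dir(\mathcal{E}_\mathcal{I}(D)) \subseteq Dir(\mathcal{E}_{\{\{v_1\}\}}(D)) \cup \mathcal{Z}$: take $e \in Dir(\mathcal{E}_\mathcal{I}(D)) \setminus Dir(\mathcal{E}_{\{\{v_1\}\}}(D))$, so $e$ lies in a unique chain component $C$. Supposing $e \notin Dir(\mathcal{E}_{\mathcal{I}_C}(D[C]))$ for contradiction, I pick $D_C' \sim_{\mathcal{I}_C} D[C]$ orienting $e$ oppositely to $D$ and build $D'$ by keeping all edges of $D$ outside $C$ and inserting the orientations of $D_C'$ inside $C$. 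Then $D'$ is acyclic (the contracted DAG on chain components is unchanged and $D_C'$ is internally acyclic), and I claim $D' \sim_\mathcal{I} D$, contradicting $e \in Dir(\mathcal{E}_\mathcal{I}(D))$. The intervention-skeleton conditions are bookkeeping: for $v_j \notin C$ the two graphs lose exactly the same cross-component edges into $v_j$, while for $v_j \in C$ the condition reduces to $D_C' \sim_{\mathcal{I}_C} D[C]$.

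The main obstacle is ruling out new v-structures that straddle the boundary of $C$ in $D'$. Configurations with $b \notin C$ or with all three of $a,b,c$ on the same side of $C$ reduce either to v-structures already present in $D$ (the involved edges are unchanged) or are impossible inside $C$ (since $D_C'$ has no v-structures). The only genuinely new possibility is $a \to b \leftarrow c$ with $b \in C$ and, WLOG, $a \in C$, $c \notin C$. Then $c \to b$ is a directed cross-component edge of $\mathcal{E}_{\{\{v_1\}\}}(D)$, $a - b$ is undirected because it lies in $G[C]$, and $a$ and $c$ are non-adjacent; but Meek's rule R1 applied to $c \to b$ and $b - a$ would already have forced $b \to a$ in $\mathcal{E}_{\{\{v_1\}\}}(D)$, contradicting $a - b \in G[C]$.
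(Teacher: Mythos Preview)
Your proof is correct and takes a genuinely different route from the paper's. The paper's argument is essentially a two-line reduction to prior results: it invokes a lemma from \cite{ghassami2018budgeted} stating that $Dir(\mathcal{E}_{\mathcal{I}_1 \cup \mathcal{I}_2}(D)) = Dir(\mathcal{E}_{\mathcal{I}_1}(D)) \cup Dir(\mathcal{E}_{\mathcal{I}_2}(D))$ to split off the $\{v_1\}$ contribution, and then a lemma from \cite{Hauser_2014} stating that $\mathcal{E}_{\mathcal{I}\cup\{I\}}(D)[C] = \mathcal{E}_{\{\emptyset,\,I\cap C\}}(D[C])$ for every chain component $C$ of $\mathcal{E}_\mathcal{I}(D)$, which immediately localizes the remaining interventions to the components. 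You instead prove both inclusions from first principles, using only Hauser's characterization of $\mathcal{I}$-Markov equivalence: for $\supseteq$ you show that $D'\sim_\mathcal{I} D$ forces $D'[C]\sim_{\mathcal{I}_C} D[C]$, and for $\subseteq$ you explicitly construct a witnessing $D'$ by swapping $D[C]$ for an $\mathcal{I}_C$-equivalent $D_C'$ and then rule out boundary v-structures via the Meek-R1/closure property of chain components. What the paper's approach buys is brevity and modularity (both cited lemmas are independently useful); what yours buys is self-containment---you do not need to import the Ghassami et~al.\ decomposition or the full Hauser restriction lemma, and the boundary v-structure step makes transparent exactly where the chain-graph structure of $\mathcal{E}_{\{\{v_1\}\}}(D)$ is used.
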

	
	\begin{algorithm}[tb]
		\label{alg:passive}
		\caption{Passive Learning by Maximizing Number of Oriented Edges}
		\begin{algorithmic}[1]
			\STATE \textbf{Input:} UCCG $G(V,E)$, budget $b$, intervention cost for each node $v \in V$ as $cost_v$
			\STATE \textbf{Output:} A single-node intervention target $\{v_{opt}\}$
			\STATE $DP[S][T] \leftarrow$ A storage indexed on $S \subseteq V$ and $T \subseteq S$, and initiated by 0 if $|S|=1$ and NULL otherwise.
			\STATE $best \leftarrow \emptyset$
			\FOR {$T \subseteq V$}
			\IF {$\sum_{x \in T} cost_x \leq budget$}
			\IF {$Calculate(V, best) \leq Calculate(V,T)$}
			\STATE $best \leftarrow  T$
			\ENDIF
			\ENDIF 
			\ENDFOR
			\STATE return $best$ \newline ----------------------------------------------------------------
			\STATE \textbf{function} $Calculate(S,T)$
			\IF {$DP[S][T]$ is not NULL}
			\STATE return $DP[S][T]$
			\ENDIF
			\STATE $DP[S][T] \leftarrow \infty$
			\STATE $v \leftarrow $ an arbitrary member of $T$
			\STATE $\mathcal{L} \leftarrow LazyIter(G, v)$
			\FOR {$R \in \mathcal{L}$}
			\STATE $num \leftarrow |Dir(R)|$
			\FOR {$C(S', E') \in \mathcal{C}(R)$}
			\STATE $num \leftarrow num + Calculate(S',T \cap S')$
			\ENDFOR
			\STATE $DP[S][T] \leftarrow min(DP[S][T], num)$
			\ENDFOR
			\STATE return $DP[S][T]$
		\end{algorithmic}
	\end{algorithm}

	Assume we want to find the optimum intervention target with respect to objective function \eqref{exp:edgecount}. For any $T,S\subseteq V$ where $T=\{v_1, v_2, ..., v_t\}$ and $T\subseteq S$, we define $DP[S][T]$ as follows: 
    \begin{equation}
    \label{exp:DP}
        DP[S][T] = \min_{D \in \textbf{D}(G)} |Dir(\mathcal{E}_{\{\{v_1\}, \{v_2\}, ..., \{v_t\}\}}(D[S])|.
    \end{equation}

	\begin{proposition}
	\label{thm:DPupdate}
	The following equation holds for $DP$ function \eqref{exp:DP}:
	\begin{equation}
	\label{exp:DPupdate}
	\begin{split}
	DP[S][T] =& 
	\\    \min_{R \in \mathcal{IR}_{\{\{v_1\}\}}(G[S])} &\Big\{
	|Dir\Big(R\Big)|
	+
	\sum_{C\in \mathcal{C}(R)}DP\Big[C\Big]\Big[T\cap C\Big]
	\Big\}.
	\end{split}
	\end{equation}
	\end{proposition}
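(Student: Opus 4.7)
The plan is to combine Lemma~\ref{lemma:limitToComp} with Proposition~\ref{thm:onetoone} and the fact that chain components of an $\mathcal{I}$-essential graph behave as independent UCCGs \citep{hauser2011characterization}. Fix $T=\{v_1,\dots,v_t\}\subseteq S$ and set $\mathcal{I}=\{\{v_1\},\dots,\{v_t\}\}$. For each $D\in \textbf{D}(G[S])$, write $R_D=\mathcal{E}_{\{\{v_1\}\}}(D)$, which by Proposition~\ref{thm:onetoone} lies in $\mathcal{IR}_{\{\{v_1\}\}}(G[S])$ and is determined by $pa_D(v_1)$.

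First I would apply Lemma~\ref{lemma:limitToComp} to the DAG $D[S]$ with intervention family $\mathcal{I}$ to obtain
\begin{equation*}
Dir(\mathcal{E}_{\mathcal{I}}(D[S]))=Dir(R_D)\cup\bigcup_{C\in \mathcal{C}(R_D)}Dir(\mathcal{E}_{\mathcal{I}_C}(D[C])),
\end{equation*}
where $\mathcal{I}_C=\{\{v\}:v\in T\cap C\}$. Because every edge inside a chain component $C$ of $R_D$ is undirected in $R_D$, while any directed edge contributed by a further intervention on a node in $C$ must lie inside $C$, the edge sets on the right-hand side are pairwise disjoint. Taking cardinalities yields
\begin{equation*}
|Dir(\mathcal{E}_{\mathcal{I}}(D[S]))|=|Dir(R_D)|+\sum_{C\in \mathcal{C}(R_D)}|Dir(\mathcal{E}_{\mathcal{I}_C}(D[C]))|.
\end{equation*}

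Next I would factor the minimization defining $DP[S][T]$. Fix $R\in \mathcal{IR}_{\{\{v_1\}\}}(G[S])$. Since chain components of $R$ are independent UCCGs, DAGs $D\in \textbf{D}(G[S])$ with $R_D=R$ are in bijection with tuples $(D_C)_{C\in \mathcal{C}(R)}$ satisfying $D_C\in \textbf{D}(C)$, and under this bijection $D[C]=D_C$. Because the $C$-th summand above depends only on $D_C$, the inner minimum factors:
\begin{equation*}
\min_{\substack{D\in \textbf{D}(G[S])\\ R_D=R}}\sum_{C\in \mathcal{C}(R)}|Dir(\mathcal{E}_{\mathcal{I}_C}(D[C]))|=\sum_{C\in \mathcal{C}(R)}\min_{D_C\in \textbf{D}(C)}|Dir(\mathcal{E}_{\mathcal{I}_C}(D_C))|,
\end{equation*}
which equals $\sum_{C\in \mathcal{C}(R)}DP[C][T\cap C]$ by the definition \eqref{exp:DP}. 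Taking the outer minimum over $R$ and noting that $|Dir(R_D)|$ depends only on $R_D$ yields \eqref{exp:DPupdate}.

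The main obstacle is justifying the factorization step: namely, that the fibre $\{D\in \textbf{D}(G[S]):R_D=R\}$ is a Cartesian product over chain components of $R$, and that subsequent interventions in $T\setminus\{v_1\}$ act on each component independently. Both follow from the characterization of $\mathcal{I}$-essential graphs in \citet{hauser2011characterization}, which guarantees that edge orientations in distinct chain components are statistically and combinatorially decoupled. Once this is in place, the rest is a double-minimization swap and an application of Lemma~\ref{lemma:limitToComp}.
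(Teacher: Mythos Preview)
Your proposal is correct and follows essentially the same approach as the paper: apply Lemma~\ref{lemma:limitToComp} to decompose $Dir(\mathcal{E}_{\mathcal{I}}(D[S]))$, use disjointness of the chain-component edge sets to pass from union to sum, and then exploit the independence of chain components to factor the minimization and recover the $DP$ recursion. If anything, your argument is more explicit than the paper's---you spell out the Cartesian-product bijection for the fibre $\{D:R_D=R\}$ and the min-swap, whereas the paper leaves these steps largely implicit.
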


	This proposition suggests that we could select an arbitrary intervention target, iterate over all $\mathcal{I}$-essential graphs in its intervention result space, and find number of directed edge in each case using already-calculated $DP$ values. After finding all $DP[V][T]$ values, we can choose the one which has a cost less than our budget and maximizes number of directed edges. 
	For optimization with respect to objective function \eqref{exp:mecsize}, we can define $DP[S][T]$ as the maximum  size of $\mathcal{I}$-MEC obtained from $G[S]$ after intervening on nodes in $T$. With the similar arguments, we can show that if we substitute $|Dir(R)|$ with $|MEC(R)|$ in equation \eqref{exp:DPupdate}, the resulting equation holds for this new $DP$ array. 
	
	The number of $DP$ elements is $3^n$, as each node is either in $T$, or in $S \backslash T$, or in $V \backslash S$. For calculation of each $DP$ value, $LazyIter$ is called once and then two for-loops are executed, iterating for $2^\Delta$ and $n$ steps respectively. Hence,
		Algorithm \ref{alg:passive} finds the best passive intervention target with respect to objective function \eqref{exp:edgecount} in $O(3^n2^\Delta(n\Delta + \Delta^3))$ operations.

	\section{Experimental Results}
	We compared $LazyIter$ and $LazyCount$ against previous work. The performance of our active learning algorithms depend on these two routines. Our DP-based passive learning algorithm is the first exact algorithm for worst-case experiment design, so we compared it with Random and MaxDegree heuristics. The only related previous work  \citet{Ghassami_2019} is an approximation designed for the average-case passive learning. Their algorithm has a time complexity of $O(kNn^{(\Delta + 1)})$ (where $N$ is the number of sampled DAGs and $k$ is the budget), and is considerably more computationally expensive than our algorithm. However, the results are not comparable as their algorithm does not solve the problem in the worst-case setting. For each test, we generated 100 graphs using the method presented in \citet{he2015counting} and calculated the average test results on them. As we can see in Figure \ref{experiments} (a), $LazyIter$ outperforms \citep{Hauser_2014} in all cases, especially when the graph is dense. We also tested $LazyCount$ against $MemoMAO$, which is the state-of-the-art MEC size calculation algorithm \cite{DBLP:conf/aaai/TalvitieK19}. Even though the difference in execution times is not considerable for sparse graphs, our algorithm performs much better for dense graphs, as seen in Figure \ref{experiments} (b). The main reason for this is that $LazyCount$ requires fewer $DP$ values in its execution. Figures \ref{experiments} (c), (d), and (e) present the discovered edge ratio (the number of edges whose orientations are inferred from experiments to the number of edges in the graph) of the passive learning algorithm versus different graph orders, edge densities (ratio of the number of edges to the maximum possible number of edges), and budgets (number of interevetions), respectively. As the graph order increases, finding the optimal target becomes harder, and therefore the difference between our algorithm and the heuristics becomes more considerable. 
	
	\section{Conclusion}
	We proposed a new method to iterate efficiently over possible $\mathcal{I}$-essential graphs and utilized it to design algorithms for computing MEC size and experiment design for active and passive learning settings. Experimental results showed that the proposed algorithms outperform other related works in terms of time complexity. As a direction of future research, it would be interesting to extend to the proposed algorithms for other objective functions in designing experiments, such as average number of oriented edges. Moreover, one can work on designing algorithms in the passive learning setting where we can intervene on multiple variables in each experiment.

	\bibliography{example_paper}
	\bibliographystyle{icml2020}

	\newpage
	\appendix
	\section{Appendices}
	
	\subsection{Proof of Proposition \ref{thm:onetoone}}
	\begin{proof}
		It could be shown that a DAG $D$ is a member of MEC corresponding to $G$ if and only if it has no v-structures \cite{he2015counting}. Let $v \in V$ be an arbitrary node. For every DAG in the MEC, the parent set of node $v$ is definitely a clique, because a v-structure is formed otherwise. If $D_1$ and $D_2$ be two members of MEC such that $pa_{D_1}(v) = pa_{D_2}(v)$ then $D_1 \sim_{\{\{v\}\}} D_2$, and therefore $D_1$ and $D_2$ are indistinguishable under the single-node intervention target $\{v\}$ \cite{Hauser_2014}.
		So every $\mathcal{E}_{\{\{v\}\}}(D)$ is determined uniquely with $pa_D(v)$.
		Every LexBFS-ordering $\sigma$ on $G$, is also a perfect elimination ordering and if we orient edges of $G$ according to $\sigma$, we get a DAG without v-structures \cite{Hauser_2014}. For an arbitrary clique $P \subseteq ne_G(v)$ in neighbors of $v$, if we orient edge set $E$ according to $LexBFS((P, v, ...), E)$, the resulting DAG $D$ is a member of MEC and $pa_D(v) = P$. This shows that there is a one-to-one correspondence between $\mathcal{E}_{\{\{v\}\}}(D)$s and cliques $P \subseteq ne_G(v)$.
	\end{proof}
	
	\subsection{Proof of Theorem \ref{thm:properties}}
	\begin{proof}
		The proofs of four statements is respectively as follows:
		\begin{itemize}
			\item Every node which is separated from $v$ by $P$ is inside $A_R$, so for every $d \in D_R$ there is a path from $d$ to $v$ in $G[V \backslash A_R]$. Now assume that there is an edge between two arbitrary nodes $a \in A_R$ and $d \in D_R$. As there is a path from $v$ to $d$ in $G[V \backslash A_R]$, and edge $a-d$ is also present in $G[V \backslash A_R]$, there is a path from $v$ to $a$ in $G[V \backslash A_R]$ and therefore $P$ is not an $(a,v)$-separator in $G$, which could not be true.
			\item The cycle $a \rightarrow v \rightarrow b \rightarrow a$ is formed otherwise.
			\item If $a \in C_R$ and the edge be directed as $b\rightarrow a$, the v-structure $v \rightarrow a \leftarrow b$ will be formed. If $a \in P$, let $v, x_1, x_2, ..., x_k, b$ be the shortest path between $v$ and $b$ in $G[\{v\} \cup C_R \cup D_R]$. No two non-consecutive nodes of this path are connected to each other, because we will find a shorter path otherwise. It is also obvious that $x_1 \in C_R$ and therefore $v\rightarrow x_1 \in R$. If $x_1-x_2$ be directed as $x_1 \leftarrow x_2$ in $R$, the v-structure $v \rightarrow x_1 \leftarrow x_2$ will be formed, so $x_1 \rightarrow x_2 \in R$. With a similar arguement, we can say $x_i \rightarrow x_{i+1} \in R$, for $1 \leq i \leq k$, where $x_{k+1} = b$. Therefore $v \rightarrow x_1 \rightarrow x_2 \rightarrow ... \rightarrow x_k \rightarrow b$ is a directed path in $R$. If $b \rightarrow a \in R$, we will have a cycle in $R$ which is impossible, and therefore $a \rightarrow b \in R$.
			\item None of the edges inside $R[A_R \cup P]$ are oriented as a direct result of intervention, so every edge in this subgraph should be oriented using Meek rules. Let $a\rightarrow b$ be the first edge oriented inside $R[A_R \cup P]$, so we have $a,b \in A_R \cup P$. In all of the four Meek rules, there is at least one already oriented edge directed towards one of the two endpoints of the edge which is being oriented. This means that there should exist either an edge $x\rightarrow a \in R$ or and edge $x \rightarrow b \in R$. But this is impossible, because we know that there are no edges directed towards any of the nodes in $A_R \cup P$ in the graph we get after intervention. This means that no Meek rules are applicable for orienting edges in $R[A_R \cup P]$, and this subgraph is undirected.
		\end{itemize}
	\end{proof}
	
	\subsection{Proof of Lemma \ref{lemma:sigmapi}}
	We break the lemma into two smaller lemmas and prove them separately:
	\begin{lemma}
		\label{lemma:sigma}
		Let $G(V,E)$ be a UCCG and $\mathcal{I}$ be an arbitrary intervention family. Then we have: 
		$$ |MEC(G)| = \sum_{R \in \mathcal{IR}_{\mathcal{I}}(G)} |MEC(R)|. $$
	\end{lemma}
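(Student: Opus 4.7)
The plan is to realize $\mathbf{D}(G)$ as a disjoint union of $\mathcal{I}$-Markov equivalence classes indexed bijectively by the elements of $\mathcal{IR}_{\mathcal{I}}(G)$. Since $|\mathbf{D}(G)| = |MEC(G)|$ by definition, summing the sizes of these classes yields the claimed identity.

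First I would observe that $\sim_{\mathcal{I}}$ is an equivalence relation on $\mathbf{D}(G)$ (reflexivity, symmetry, and transitivity all follow directly from the notion of statistical indistinguishability under $\mathcal{I}$), so it partitions $\mathbf{D}(G)$ into disjoint $\mathcal{I}$-MECs. Assigning each $D \in \mathbf{D}(G)$ to $\mathcal{E}_{\mathcal{I}}(D)$ defines a map $\Phi: \mathbf{D}(G) \to \mathcal{IR}_{\mathcal{I}}(G)$ that is surjective by the very definition of $\mathcal{IR}_{\mathcal{I}}(G)$ as $\{\mathcal{E}_{\mathcal{I}}(D): D \in \mathbf{D}(G)\}$.

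The key step is to verify that the fibers of $\Phi$ are precisely the $\mathcal{I}$-MECs, i.e., $\Phi(D_1) = \Phi(D_2)$ iff $D_1 \sim_{\mathcal{I}} D_2$. The forward direction ($\Leftarrow$) is immediate from the definition of $\mathcal{E}_{\mathcal{I}}(D)$ as the union over the edge sets of DAGs $\mathcal{I}$-equivalent to $D$. For the reverse direction, I would invoke the Hauser--Bühlmann characterization stated earlier in the excerpt: sharing the same $\mathcal{I}$-essential graph forces the same set of v-structures (recorded as directed edges common to every representative) and the same skeleton of $D^{(I)}$ for every $I \in \mathcal{I} \cup \{\emptyset\}$, which by that characterization is equivalent to $D_1 \sim_{\mathcal{I}} D_2$. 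Consequently, the fiber $\Phi^{-1}(R)$ coincides with $MEC(R)$ for each $R \in \mathcal{IR}_{\mathcal{I}}(G)$, and summing $|MEC(R)|$ over $R$ gives $|\mathbf{D}(G)| = |MEC(G)|$.

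The main obstacle is the reverse direction of the fiber identification in Step 3; without appealing to the Hauser--Bühlmann characterization, it is not a priori obvious that the $\mathcal{I}$-essential graph is a \emph{complete} invariant of an $\mathcal{I}$-MEC rather than merely a well-defined one. Everything else is routine bookkeeping about equivalence classes.
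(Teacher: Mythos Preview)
Your proposal is correct and follows essentially the same approach as the paper: partition $\mathbf{D}(G)$ by $\sim_{\mathcal{I}}$, identify each class with a unique element of $\mathcal{IR}_{\mathcal{I}}(G)$, and sum. The paper's own proof is a one-sentence version of this argument and does not explicitly justify the ``complete invariant'' direction you flag as the main obstacle; your appeal to the Hauser--B\"uhlmann characterization is the right way to close that gap.
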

	\begin{proof}
		Every DAG $D$ in MEC corresponding to $G$ is exactly in one of the $\mathcal{I}$-essential graphs in $\mathcal{IR}(G)$, based on direction of the edges connected to intervention targets inside that DAG. Therefore, each DAG is exactly counted once in the summation.
	\end{proof}

	\begin{lemma}
		\label{lemma:pi}
		Consider $\mathcal{I}$-essential graph $\mathcal{E}_\mathcal{I}(D)$ of a DAG $D$ and intervention target $\mathcal{I}$. Let $\mathcal{C}(\mathcal{E}_\mathcal{I}(D))$ be the set of all chain components of $\mathcal{E}_\mathcal{I}(D)$. Then we have:
		$$|MEC(\mathcal{E}_\mathcal{I}(D))| = \prod_{C \in \mathcal{C}(\mathcal{E}_\mathcal{I}(D))}|MEC(C)|.$$
	\end{lemma}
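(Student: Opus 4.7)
The plan is to establish a bijection between $MEC(\mathcal{E}_\mathcal{I}(D))$ and the Cartesian product $\prod_{C \in \mathcal{C}(\mathcal{E}_\mathcal{I}(D))} MEC(C)$, which immediately yields the product formula. First I would recall that $\mathcal{E}_\mathcal{I}(D)$ is a chain graph with chordal chain components, so by definition every undirected edge lies entirely within a single chain component $C$, while every directed edge has its two endpoints in distinct chain components. Producing a DAG in $MEC(\mathcal{E}_\mathcal{I}(D))$ therefore amounts to orienting the undirected edges inside each $C$ in a way that creates neither a directed cycle nor a new v-structure (using the characterization of $\mathcal{I}$-Markov equivalence from Hauser and B\"uhlmann).

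For the forward map, I send $D' \in MEC(\mathcal{E}_\mathcal{I}(D))$ to the tuple $(D'[C])_{C}$. Each $D'[C]$ is a DAG on $C$'s (chordal) skeleton with no v-structures, inheriting acyclicity and v-structure-freeness from $D'$, so $D'[C] \in MEC(C)$. For the inverse, given any family $(D_C)_C$ with $D_C \in MEC(C)$, I assemble $D'$ by keeping the directed edges of $\mathcal{E}_\mathcal{I}(D)$ and orienting the undirected edges of each component $C$ according to $D_C$. Acyclicity of $D'$ follows because the chain graph structure induces an acyclic quotient on its chain components: one can topologically order the chain components, then lift a topological order of each $D_C$ to obtain a topological order of $D'$. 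Bijectivity is immediate once the inverse is shown to land in $MEC(\mathcal{E}_\mathcal{I}(D))$.

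The main obstacle is verifying that no new v-structure $a \to b \leftarrow c$ appears in the assembled $D'$. If both $a$ and $c$ lie in the same chain component as $b$, the v-structure would already exist in $D_C$, contradicting $D_C \in MEC(C)$. In the mixed cases, where at least one of $a,c$ is outside $b$'s chain component, the crucial fact to leverage is that the directed parents of $b$ in $\mathcal{E}_\mathcal{I}(D)$ must be adjacent (in the skeleton) to every neighbor of $b$ inside $C$. This ``shielding'' is a standard consequence of Meek rule closure: if $a \to b \in \mathcal{E}_\mathcal{I}(D)$ and $b - c \in \mathcal{E}_\mathcal{I}(D)$ with $a$ not adjacent to $c$, then rule R1 would have forced $b \to c$, contradicting that this edge is undirected in the essential graph. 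Once shielding is in hand, any hypothetical v-structure in $D'$ with at least one directed ``side'' from $\mathcal{E}_\mathcal{I}(D)$ is ruled out because the two tails of the alleged collider are already adjacent, completing the verification and hence the lemma.
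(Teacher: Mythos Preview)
Your argument is correct. The paper's own proof is much terser: it simply cites \citet{Hauser_2014} for the statement that orientations inside one chain component of an $\mathcal{I}$-essential graph are unrelated to orientations in the others (and notes the analogous observational result in \citet{he2015counting}), and reads off the product formula from that independence. You instead build the bijection explicitly and verify the delicate direction---that an arbitrary tuple $(D_C)_C$ assembles into a member of $MEC(\mathcal{E}_\mathcal{I}(D))$---by invoking Meek rule~R1 closure of the essential graph to rule out cross-component colliders. This is essentially the argument that underlies the cited independence result, so the two proofs agree in spirit; yours is self-contained while the paper's relies on a black-box citation.

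One small imprecision in your case split: your closing sentence claims that whenever at least one side of a putative new collider $a\to b\leftarrow c$ is already directed in $\mathcal{E}_\mathcal{I}(D)$, the tails $a$ and $c$ must be adjacent. That is only forced by the R1 argument when the \emph{other} side is undirected in $\mathcal{E}_\mathcal{I}(D)$. When both $a\to b$ and $c\to b$ are already directed in $\mathcal{E}_\mathcal{I}(D)$, adjacency of $a,c$ need not hold---but then the collider is present in $D$ itself and is not ``new,'' so there is nothing to exclude. Once this third sub-case is recorded as trivial, your verification is complete.
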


	\begin{proof}
    	\citep{Hauser_2014} showed that the direction of edges inside each chain component of an $\mathcal{I}$-essential graph is unrelated to the direction of edges in other components. Therefore edges inside each chain component could be oriented independently, and number of valid orientations of edges in $\mathcal{E}_\mathcal{I}(D)$ (orientations without v-structures) is equal to multiplication of number of valid orientations in each chain component. \citet{he2015counting} proved a similar lemma for observational cases.
	\end{proof}
    	
    Lemma \ref{lemma:sigma} shows that we can calculate the size of MEC represented by $G$ via calculating sizes of $\mathcal{I}$-MECs represented by members of $\mathcal{IR}_{\{\{v\}\}}(G)$. For counting number of DAGs in each of these $\mathcal{I}$-MECs, we use Lemma \ref{lemma:pi}, and therefore the equation in Lemma \ref{lemma:sigmapi} holds.
    
	\subsection{Proof of Lemma \ref{lemma:limitToComp}}
	
	We need these two lemmas for the proof:
	
	\begin{lemma}
		\label{lemma:edgeindependency}
		\cite{ghassami2018budgeted}
		For any DAG $D(V,E)$ and sets $I_1, I_2 \subseteq V$, we have: $$Dir(\mathcal{E}_{\{I_1\cup I_2\}}(D)) =  Dir(\mathcal{E}_{\{I_1\}}(D))\cup Dir(\mathcal{E}_{\{I_2\}}(D)).$$
	\end{lemma}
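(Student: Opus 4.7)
The plan is to prove the two set inclusions separately, exploiting the characterization (due to Hauser and B\"uhlmann) that $Dir(\mathcal{E}_{\mathcal{I}}(D))$ equals the Meek-closure of the observational v-structures of $D$ together with the \emph{cut edges} of each intervention target in $\mathcal{I}$, where an edge is a cut edge of $I$ if exactly one of its endpoints lies in $I$. This reduces both directions to set-theoretic comparisons between seed sets, modulo the closure operator.

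For $Dir(\mathcal{E}_{\{I_j\}}(D)) \subseteq Dir(\mathcal{E}_{\{I_1 \cup I_2\}}(D))$ with $j\in\{1,2\}$, the seed set on the right contains every cut edge of $I_j$ whose other endpoint lies outside $I_{3-j}$, and the remaining cut edges of $I_j$ (those with both endpoints in $I_1 \cup I_2$) must be recovered through a short Meek-rule argument invoking the v-structure seeds together with cut edges from the opposite singleton. For the reverse inclusion, a cut edge of $I_1 \cup I_2$ is automatically a cut edge of at least one of $I_1$ or $I_2$, so the seed set on the left is contained in the union of the two singleton seed sets, and the observational v-structure portion is shared across all three cases. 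Combining both containments reduces the claim to showing that Meek-closure distributes over the specific union of seed sets produced by the two singleton families.

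The hard part is exactly this distributivity of Meek-closure, since it does not hold for arbitrary unions of seed edges. I would attack it by a rule-by-rule case analysis of R1--R4, tracking whether the premise edges of a given rule application all originate in a common singleton's seed set; the chain-component structure of $\mathcal{I}$-essential graphs (each chain component being a UCCG in its own right, as noted in Section 3) should severely restrict how seeds from $\{I_1\}$ and $\{I_2\}$ can interact within a single propagation chain. As a cleaner fallback, if the direct closure analysis becomes too intricate, I would argue contrapositively: assuming the target edge $a \to b$ is undirected in both $\mathcal{E}_{\{I_1\}}(D)$ and $\mathcal{E}_{\{I_2\}}(D)$, extract DAG witnesses reversing it from each singleton $\mathcal{I}$-MEC and reconcile them into a single DAG in the joint $\mathcal{I}$-MEC that still reverses $a \to b$, contradicting membership in the left-hand side. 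The reconciliation step — producing a global DAG that simultaneously matches both witnesses on their respective interventional skeletons while preserving v-structures — would be the central technical work in that route.
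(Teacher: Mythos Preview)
The paper does not prove this lemma; it is quoted from \cite{ghassami2018budgeted} and used as a black box in the proof of Lemma~\ref{lemma:limitToComp}, so there is no in-paper argument to compare against. More importantly, your proposal reads the subscript $\{I_1\cup I_2\}$ literally --- a one-element intervention family whose single target is the set $I_1\cup I_2$ --- and under that reading the statement is false. Take $D$ to be the two-vertex DAG $a\to b$ with $I_1=\{a\}$ and $I_2=\{b\}$: the edge $a{-}b$ is a cut edge of each $I_j$ separately, hence directed in both $\mathcal{E}_{\{I_1\}}(D)$ and $\mathcal{E}_{\{I_2\}}(D)$, but it is \emph{not} a cut edge of $\{a,b\}$, there are no v-structures to propagate from, and the DAG $b\to a$ is $\{\{a,b\}\}$-Markov equivalent to $D$, so $Dir(\mathcal{E}_{\{\{a,b\}\}}(D))=\emptyset$. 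Your proposed ``short Meek-rule argument'' for recovering the cut edges of $I_j$ whose other endpoint lies in $I_{3-j}$ therefore cannot succeed in general --- those edges can be genuinely lost on the left-hand side.

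The way the lemma is actually applied in the proof of Lemma~\ref{lemma:limitToComp} (splitting the family $\{\{v_1\},\dots,\{v_k\}\}$ into $\{\{v_1\}\}$ and $\{\{v_2\},\dots,\{v_k\}\}$) makes clear that the intended subscript is a union of \emph{families}, i.e.\ $Dir(\mathcal{E}_{\mathcal{I}_1\cup\mathcal{I}_2}(D))=Dir(\mathcal{E}_{\mathcal{I}_1}(D))\cup Dir(\mathcal{E}_{\mathcal{I}_2}(D))$, so that the two experiments are performed separately rather than merged into one target. Under this corrected reading the Hauser--B\"uhlmann seed set on the left is \emph{exactly} the union of the two seed sets on the right (cut edges of each target in $\mathcal{I}_1\cup\mathcal{I}_2$ are precisely the cut edges coming from $\mathcal{I}_1$ together with those from $\mathcal{I}_2$), the forward-direction obstacle you identified vanishes, and the entire claim reduces to the closure-distributivity question you already flagged as the hard part. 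Your contrapositive fallback --- reconciling a witness DAG from each $\mathcal{I}_j$-class into a single member of the joint $(\mathcal{I}_1\cup\mathcal{I}_2)$-class that still reverses the edge --- is then a reasonable plan for the corrected statement.
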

	
	\begin{lemma}
		\label{lemma:comppnentinterventionindependency}
		\cite{Hauser_2014} Consider an $\mathcal{I}$-essential graph of some DAG $D$, and let $C \in \mathcal{C}(\mathcal{E}_\mathcal{I}(D))$ be one of its chain components. Let $I \subseteq V, I \notin \mathcal{I}$ be another intervention target. Then we have:
		$$\mathcal{E}_{\mathcal{I} \cup \{I\}}(D)[C] = \mathcal{E}_{\{\emptyset, I \cap V'\}}(D[C])$$
	\end{lemma}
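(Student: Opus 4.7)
\textbf{Proof plan for Lemma \ref{lemma:comppnentinterventionindependency}.} I will prove the two essential graphs are equal by showing that their respective Markov equivalence classes, restricted to $C$, contain the same set of DAGs (on vertex set $C$); equality of essential graphs then follows since both are defined as unions of edge sets over their respective classes. Throughout I read ``$V'$'' in the statement as the vertex set $C$ of the chain component, so ``$I \cap V'$'' means $I \cap C$.

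\textbf{Step 1 (Characterize the LHS restricted to $C$).} First I will use Hauser and Buhlmann's result (already cited in the excerpt) that edges inside different chain components of an $\mathcal{I}$-essential graph can be oriented independently, i.e., the set $\{D'[C] : D' \sim_{\mathcal{I}} D\}$ coincides with the MEC of the UCCG $\mathcal{E}_{\mathcal{I}}(D)[C]$, which in turn equals the MEC of $D[C]$ (since $D[C]$ is one of its members and all share v-structures and skeleton). Consequently, every $D' \sim_{\mathcal{I}} D$ can be decomposed as ``$D$ outside $C$'' plus ``some DAG $D'[C]$ in $\mathrm{MEC}(D[C])$'', and every such decomposition recombines into a DAG equivalent to $D$ under $\mathcal{I}$.

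\textbf{Step 2 (Translate the extra constraint imposed by $I$).} Adding $I$ to the intervention family adds exactly one constraint by the Hauser characterization: $D'^{(I)}$ and $D^{(I)}$ must have the same skeleton. I will decompose the set of edges into (i) cross-component edges, which are already directed identically in every $D' \sim_\mathcal{I} D$ (so they either uniformly survive or uniformly disappear when deleting edges into $I$), and (ii) within-component edges. For within-$C$ edges, removing edges into $I$ in the full DAG coincides with removing edges into $I \cap C$ in $D'[C]$, i.e.\ $D'^{(I)}[C] = D'[C]^{(I \cap C)}$, and similarly for $D$. Thus the skeleton-preservation constraint localizes cleanly to each component and, inside $C$, is exactly the constraint that $D'[C]^{(I\cap C)}$ and $D[C]^{(I\cap C)}$ share a skeleton.

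\textbf{Step 3 (Match with the RHS).} Combining Steps 1 and 2, the set $\{D'[C] : D' \sim_{\mathcal{I}\cup\{I\}} D\}$ is exactly the set of DAGs $D''$ on vertex set $C$ that are (a) in $\mathrm{MEC}(D[C])$ (matching v-structures and skeleton with $D[C]$, i.e.\ $\emptyset$-equivalent), and (b) satisfy $D''^{(I\cap C)}$ has the same skeleton as $D[C]^{(I\cap C)}$. By the Hauser characterization applied to $D[C]$ with intervention family $\{\emptyset, I\cap C\}$, this is precisely $\mathrm{MEC}(\mathcal{E}_{\{\emptyset, I\cap C\}}(D[C]))$. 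Taking the union of edge sets over this class on both sides gives $\mathcal{E}_{\mathcal{I}\cup\{I\}}(D)[C] = \mathcal{E}_{\{\emptyset, I\cap C\}}(D[C])$.

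\textbf{Main obstacle.} The nontrivial point is the ``extension'' direction in Step 2: given any $D'' \in \mathrm{MEC}(\mathcal{E}_{\{\emptyset, I\cap C\}}(D[C]))$, I must glue $D''$ inside $C$ with $D$'s orientation outside $C$ and verify the result is a DAG with no new v-structures and is $(\mathcal{I}\cup\{I\})$-equivalent to $D$. The absence of new v-structures and cycles follows because $C$ is a chain component of $\mathcal{E}_\mathcal{I}(D)$ (so every cross-$C$ edge that could create a v-structure at a node of $C$ is already directed away from $C$, and any directed cycle would cross into a single-entry, single-exit pattern ruled out by the chain graph structure); $\mathcal{I}$-equivalence is then Step 1, and the additional same-skeleton-under-$I$ condition holds by construction of $D''$. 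Once this gluing is justified, the rest of the argument is bookkeeping.
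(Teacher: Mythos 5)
The paper does not actually prove this lemma: it is imported verbatim as a known result of \citet{Hauser_2014}, so there is no in-paper argument to compare against. Your proposal is a reasonable self-contained reconstruction, and its skeleton is sound: reduce $(\mathcal{I}\cup\{I\})$-equivalence to $\mathcal{I}$-equivalence plus the single extra skeleton condition on $D'^{(I)}$, observe that this extra condition is automatic on the (invariantly) directed edges and localizes on each chain component to the condition that $D'[C]^{(I\cap C)}$ and $D[C]^{(I\cap C)}$ share a skeleton, and then match the resulting set $\{D'[C]: D'\sim_{\mathcal{I}\cup\{I\}} D\}$ with $\{D'': D''\sim_{\{\emptyset, I\cap C\}} D[C]\}$ via the characterization of interventional Markov equivalence; taking unions of edge sets yields the claimed identity. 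You correctly isolate the one nontrivial ingredient, namely that every valid orientation of the UCCG $C$ extends (by gluing with $D$ outside $C$) to a member of the $\mathcal{I}$-MEC, and you are entitled to cite the component-independence result of Hauser and B\"uhlmann for it, since the paper itself leans on the same fact in Lemma~\ref{lemma:pi}.

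One justification in your ``Main obstacle'' paragraph is wrong as stated, though not fatally so. You claim that no new v-structure can arise at a node of $C$ because ``every cross-$C$ edge that could create a v-structure at a node of $C$ is already directed away from $C$.'' That is false in general: an $\mathcal{I}$-essential graph can contain directed edges $x\rightarrow c$ with $c\in C$ and $x\notin C$. The correct reason no new v-structure $x\rightarrow c\leftarrow y$ appears is that whenever $x\rightarrow c$ is directed into the component and $c - y$ is undirected inside it, $x$ and $y$ must already be adjacent (otherwise Meek rule~1 would have forced $c\rightarrow y$, contradicting that $c-y$ lies in a chain component). Similarly, acyclicity of the glued graph follows from the absence of partially directed cycles in a chain graph, not from a ``single-entry, single-exit'' pattern. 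Since you also defer the gluing step to the cited independence result, this is an imprecision in the side remark rather than a gap in the argument, but if you intend the proof to be self-contained you should replace that sentence with the Meek-rule-1 adjacency argument.
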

	
	Now we prove Lemma \ref{lemma:limitToComp}.
	\begin{proof}
		Using Lemma \ref{lemma:comppnentinterventionindependency} we can say:
		$$\mathcal{E}_{\{\{v_1\}, \{v_2\}, ..., \{v_k\}\}}(D)[C] =$$
		$$\mathcal{E}_{\{\{v_1\}\} \cup \{\{v_2\}, ..., \{v_k\}\}}(D)[C]=$$
		$$\mathcal{E}_{\{ \emptyset, \{\{v_2\}, ..., \{v_k\}\} \cap V'\}}(D[C])=$$
		$$\mathcal{E}_{\{\{\{v_2\}, ..., \{v_k\}\} \cap V'\}}(D[C])=$$
		$$\mathcal{E}_{\{\{v^{C}_1\}, ..., \{v^{C}_m\}\}}(D[C])$$
		Where the equality between third and fourth lines comes from the fact that we already know the observational essential graph of the chain component, as we are given the UCCG.
		Using Lemma \ref{lemma:edgeindependency}, we have: 
		$$Dir(\mathcal{E}_{\{\{v_1\}, \{v_2\}, ..., \{v_k\}\}}(D)) $$
		$$=Dir(\mathcal{E}_{\{\{v_1\}\}}(D)) \cup Dir(\mathcal{E}_{\{\{v_2\}, \{v_3\}, ..., \{v_k\}\}}(D))$$
		But as we mentioned earlier, direction of edges inside one chain component gives us no information about direction of edges in other chain components. 

		We can say:
		\begin{equation*}
		\begin{split}
		Dir(R_1)
		\cup&  Dir(\mathcal{E}_{\{\{v_2\}, \{v_3\}, ..., \{v_k\}\}}(D))   
		\\=Dir(R_1)& \bigcup_{C \in \mathcal{C}(R_1)} Dir(\mathcal{E}_{\{\{v_2\}, \{v_3\}, ..., \{v_k\}\}}(D)[C]) 
		\\=Dir(R_1)&\bigcup_{C \in \mathcal{C}(R_1)} Dir(\mathcal{E}_{\{\{v^C_1\}, \{v^C_2\}, ..., \{v^i_{m_C}\}\}}(D)[C]).
		\end{split}
		\end{equation*}

	\end{proof}
	\subsection{Proof of Proposition \ref{thm:DPupdate}}
	\begin{proof}
	We know that every valid orientation of all undirected edges in all of the chain components gives us a DAG in the $\mathcal{I}$-MEC $\mathcal{E}_{\{\{v_1\}\}}(D)$. Moreover we know that the minimum value of $|Dir(\mathcal{E}_{\{\{v^C_1\}, \{v^C_2\}, ..., \{v^C_{m_C}\}\}}(D)[C])|$ is $DP[C][\{v^C_1, v^C_2, ..., v^C_{m_C}\}] = DP[C][T \cap C]$. As chain components have distinct edges sets, we have:
	
	\begin{equation*}
	\begin{split}
	\Big|&\bigcup_{C \in \mathcal{C}(\mathcal{E}_{\{\{v_1\}\}}(D))} Dir(\mathcal{E}_{\{\{v^{C}_1\}, \{v^{C}_2\}, ..., \{v^{C}_{m_C}\}\}}(D[C]))\Big| =
	\\&\sum_{C \in \mathcal{C}(\mathcal{E}_{\{\{v_1\}\}}(D))} \Big|Dir(\mathcal{E}_{\{\{v^{C}_1\}, \{v^{C}_2\}, ..., \{v^{C}_{m_C}\}\}}(D[C]))\Big|.
	\end{split}
	\end{equation*}
	
	Lemma \ref{lemma:limitToComp} implies that for counting number of directed edges in each $\mathcal{I}$-essential graph, we could consider each component independently and therefore the minimum number of directed edges for each chain component can be found via $DP$ values. We can iterate over all possible $\mathcal{E}_{\{\{v_1\}\}}(D)$s and use $DP$ values to find the minimum number of directed edges for each case. This means $DP[V][T]$ could be calculated by the recursive formula \eqref{exp:DPupdate}.
	\end{proof}
	
\end{document}